\documentclass[11pt, a4paper, google]{google}

\AtBeginDocument{}

\usepackage[numbers, compress, sort]{natbib}
\usepackage{nicefrac}
\usepackage{makecell}
\usepackage{siunitx}
\usepackage{animate}
\usepackage{epsfig}
\usepackage{float}
\usepackage{placeins}
\usepackage{stfloats}
\usepackage{xstring}
\usepackage{multirow}
\usepackage{xspace}
\usepackage{subcaption}
\usepackage[hang,flushmargin]{footmisc}
\usepackage{adjustbox}
\usepackage{wrapfig}
\usepackage{algorithm,algorithmicx,algpseudocode}
\usepackage{listings}
\usepackage{bm}
\usepackage{pifont}
\usepackage{thmtools,thm-restate}
\usepackage{bbding}

\definecolor{tabfirst}{rgb}{1, 0.7, 0.7}
\definecolor{tabsecond}{rgb}{1, 0.85, 0.7}
\definecolor{tabthird}{rgb}{1, 1, 0.7}

\newcommand{\ab}{\mathbf{a}}

\newcommand{\hb}{\mathbf{h}}

\newcommand{\sbb}{\mathbf{s}}

\newcommand{\xb}{\mathbf{x}}

\newcommand{\zb}{\mathbf{z}}

\newcommand{\Dc}{{\mathcal{D}}}
\newcommand{\Ec}{{\mathcal{E}}}

\newcommand{\Nc}{{\mathcal{N}}}

\newcommand{\Ed}{\mathbb{E}}

\newcommand{\Rd}{\mathbb{R}}

\newcommand{\bepsilon}{{\boldsymbol{\epsilon}}}

\DeclareMathOperator{\Var}{Var}

\DeclareMathOperator*{\argmax}{arg\,max}

\def\[#1\]{\begin{align}#1\end{align}}

\newcommand{\ie}{\textit{i}.\textit{e}., }
\newcommand{\eg}{\textit{e}.\textit{g}., }

\usepackage[most]{tcolorbox}

\newtcolorbox[auto counter]{finding}[1][]{%
  enhanced,
  breakable,
  colback=white,
  colframe=black!25,
  boxrule=0.6pt,
  arc=2pt,
  left=6pt,right=6pt,top=5pt,bottom=5pt,   
  before skip=4pt,
  after skip=4pt,
  borderline west={2pt}{0pt}{black!25},
  before upper={\textbf{Finding~\thetcbcounter.}~},
  #1
}

\newtheorem{theorem}{Theorem}
\newtheorem{proposition}[theorem]{Proposition}

\newcommand{\logoinline}[2]{%
  \IfFileExists{#1.pdf}{\raisebox{4pt}{$\vcenter{\hbox{\includegraphics[height=#2]{#1}}}$}}{%
    \IfFileExists{#1.png}{\raisebox{4pt}{$\vcenter{\hbox{\includegraphics[height=#2]{#1}}}$}}{}%
  }%
}
\fancypagestyle{firststyle}{%
  \fancyhead[L]{%
    \logoinline{assets/logo_Google_FullColor}{26pt}\hspace{12pt} \logoinline{assets/logo_ETH}{15pt} \hspace{12pt}%
    \logoinline{assets/logo_Copenhagen}{24pt}%
  }%
  \fancyhead[C]{}%
  \fancyhead[R]{%
     \hspace{12pt}%
  }%
  \fancyfoot[L]{%
    \unvbox\stitchfnbox
    \ifdefined\correspondingauthor
      \if\relax\the\correspondingauthor\relax\else
        \footerfont\itshape Corresponding author(s): \the\correspondingauthor%
      \fi
    \fi
  }%
  \fancyfoot[R]{}%
  \fancyfoot[C]{}%
}

\paperurl{https://gohyojun15.github.io/StitchVM} 
\uselogo{}
\reportnumber{}
\renewcommand{\today}{}

\makeatletter
\newbox\stitchfnbox
\setbox\stitchfnbox=\vbox{}
\renewcommand{\@footnotetext}[1]{%
  \global\setbox\stitchfnbox=\vbox{%
    \hsize=\textwidth
    \unvbox\stitchfnbox
    \footnotesize\@makefntext{#1}\par
  }%
}

\makeatother

\title{Stitched Value Model for Diffusion Alignment}

\author[1]{Hyojun Go}
\author[\Envelope]{Hyungjin Chung}
\author[2]{Prune Truong}
\author[2]{Goutam Bhat}
\author[1]{Li Mi}
\author[3]{Zhaochong An}
\author[1,\Envelope]{Zixiang Zhao}
\author[1]{Dominik Narnhofer}
\author[3]{Serge Belongie}
\author[2]{Federico Tombari}
\author[1]{Konrad Schindler}

\affil[1]{ETH Zurich}
\affil[2]{Google}
\affil[3]{University of Copenhagen}

\correspondingauthor{
Zixiang Zhao (zixiang.zhao@ethz.ch), 
Hyungjin Chung (hyungjin.chg@gmail.com)
}

\hypersetup{
  pdftitle={Stitched Value Model for Diffusion Alignment},
  pdfauthor={Hyojun Go, Hyungjin Chung, Prune Truong, Goutam Bhat, Li Mi, Zhaochong An, Zixiang Zhao, Dominik Narnhofer, Serge Belongie, Federico Tombari, Konrad Schindler},
}

\definecolor{GoogleRed}{HTML}{DB4437}
\definecolor{GoogleGreen}{HTML}{0F9D58}
\definecolor{GoogleBlue}{HTML}{4285F4}
\definecolor{mygrey}{RGB}{235,235,235}

\begin{abstract}
For practical use, diffusion- or flow-based generative models must be \emph{aligned} with task-specific rewards, such as prompt fidelity or aesthetic preference. That alignment is challenging because the reward is defined for clean output images, but the alignment procedure requires value function estimates at noisy intermediate latents.
Existing methods resort to Tweedie-style or Monte Carlo approximations, trading off estimator bias against computational cost: Tweedie estimates are efficient but biased, while Monte Carlo estimates are more accurate but require expensive rollouts.
A natural alternative would be a learned value function, but it remains an open question how to effectively train a strong and general value model specifically for noisy latents.
Here, we propose \textbf{StitchVM} (Stitched Value Model), a model stitching framework that efficiently transfers reward models pretrained for clean images to the noisy latent regime.
StitchVM starts from an existing, truncated pixel-space reward model and attaches a frozen diffusion backbone to it as its head. From the pixel-space model, the resulting hybrid retains a carefully pretrained, robust reward capability; from the diffusion backbone, it inherits its native ability to handle noisy latents. The stitching procedure is exceptionally lightweight, e.g., stitching and finetuning CLIP ViT-L and SD 3.5 Medium takes only $\approx$10 GPU-hours.
By lifting powerful pixel-space reward models to latent space, StitchVM opens up a new style of diffusion alignment: instead of rough, yet costly per-sample approximation of the value function, the correct function for the actual, noisy latents is constructed once and then amortized over many samples and iterations.
We show that this approach yields improvements across a broad range of downstream steering and post-training methods: DPS becomes $3.2\times$ faster while halving peak GPU memory, and DiffusionNFT becomes $2.3\times$ faster.
\end{abstract}

\begin{document}

\maketitle

\fancyfoot[L]{%
  \parbox[t]{0.88\textwidth}{\raggedright\unvbox\stitchfnbox}%
}

\section{Introduction}
Diffusion~\cite{ho2020denoising, sohl2015deep, song2021scorebased, go2023addressing} and flow-based~\cite{lipman2023flow, albergo2023building, liu2023flow} denoising models have enabled remarkable success in generative image modelling, including image~\cite{labs2025flux, saharia2022photorealistic, wu2025qwen}, video~\cite{wan2025wan, wiedemer2025video, an2026video, an2025onestory}, and 3D generation~\cite{go2026texttod, go2025splatflow, go2025videorfsplat}. 
Still, the pretraining objective of these models captures the training data distribution, and in practice, task-specific adaptation is often required, e.g. to ensure fidelity to a user prompt~\cite{ghosh2023geneval} or to match human aesthetic preferences~\cite{liang2025aesthetic, wu2023human}. This customization is achieved through {\em alignment}, which aims to adapt the pretrained diffusion or flow model according to a specific reward.

Most existing alignment methods, whether applied at training time~\cite{prabhudesai2023aligning, clark2024directly, lee2023aligning, dong2023raft, wallace2024diffusion, yang2024using, liu2026improving, black2024training, fan2023reinforcement, zheng2026diffusionnft} or at inference time~\cite{chung2023diffusion, song2023loss, ye2024tfg, yu2023freedom, he2024manifold, kim2025flowdps, song2023pseudoinverse, singhal2025a, kim2026inferencetime, li2024derivative, wu2023practical, kim2025testtime, li2025dynamic, zhang2025inferencetime, skreta2025feynmankac}, share a common requirement: they must repeatedly assess noisy latents $\zb_t$ along the denoising trajectory to determine how promising they are.
This information is captured by a \emph{value function}~\cite{uehara2025inference, li2024derivative}, which measures the \emph{expected} reward of clean samples induced by $\zb_t$.

Directly evaluating the value function is difficult, in large part because the reward is normally defined for clean images $\xb_0$~\cite{wu2023human, xu2023imagereward, wang2025unified,radford2021learning, ma2025hpsv3}. 
Therefore, existing methods must resort to workarounds: \textbf{(1)~Tweedie approximation}, which first estimates the posterior mean of the clean sample induced by $\zb_t$, then computes the reward for that proxy~\cite{chung2023diffusion, song2023loss, efron2011tweedie}; or \textbf{(2)~Monte Carlo (MC) approximation}, which rolls out multiple denoising trajectories from $\zb_t$ and averages the reward for each resulting clean sample~\cite{uehara2025inference, li2024derivative}.
Both approaches have significant drawbacks: the Tweedie approximation can be substantially biased in the high-noise regime~\cite{zhu2024think}, moreover it requires an extra denoiser evaluation and VAE decoding; MC incurs high, often prohibitive, cost for the rollouts.
 
\begin{figure}[t]
    \centering
    \includegraphics[width=1\linewidth]{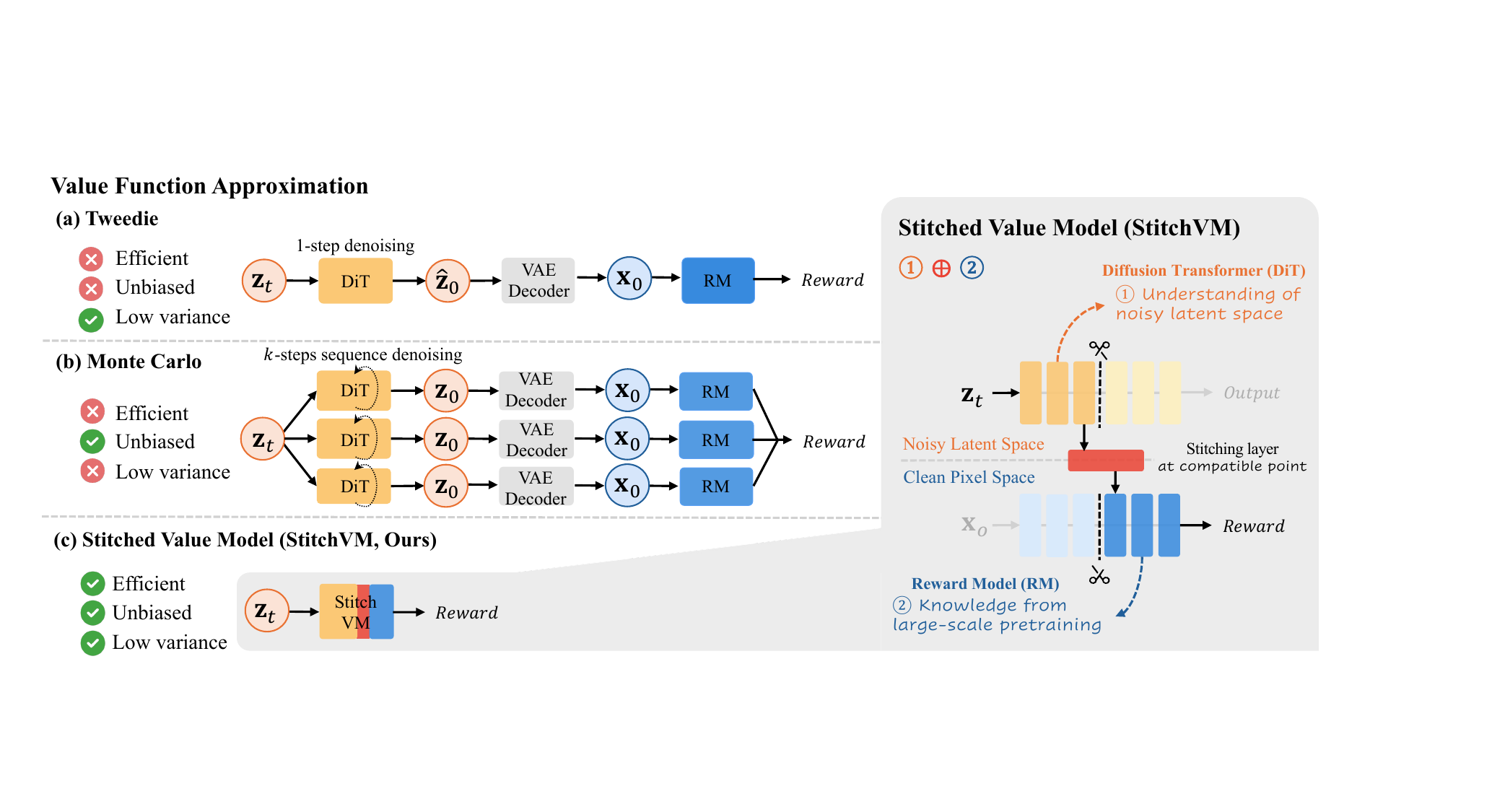}
    \vspace{-0.5cm}
    \caption{\textbf{StitchVM overview.} \textbf{Left:} Unlike Tweedie (a), which requires a denoiser and VAE decoder evaluations, and is biased in high noise, and MC (b), which requires $N$-denoising rollouts, StitchVM (c) directly evaluates the value function on noisy latent. \textbf{Right:} StitchVM stitches a diffusion backbone head to a reward model tail, turning the reward model into a value model. 
      }
    \vspace{-0.55cm}
    \label{fig:overview}
\end{figure}

An alternative to these workarounds is to directly learn a value function for noisy latents~\cite{li2024derivative, dai2025vard, liu2026beyond, mi2025video, vysotskyi2026critic}. 
Once trained, such value models can be incorporated into both training-time and inference-time methods, improving alignment along both axes.
In terms of accuracy, they avoid the bias of Tweedie and the inherent variance of stochastic MC rollouts~\cite{vysotskyi2026critic}; in terms of efficiency, they eliminate both the extra denoiser and decoder evaluations required by Tweedie and the costly rollouts of MC~\cite{mi2025video, liu2026improving}.

Despite these clear advantages, only few works have explored direct training of a value model. This is because substantial amounts of data and compute would be required to train a value function for noisy latents that could rival the performance and generality of contemporary pixel-space reward models~\cite{wu2023human, xu2023imagereward, wang2025unified,radford2021learning, ma2025hpsv3}. Beyond the prohibitive upfront cost, such an approach is fundamentally unsustainable: for each new diffusion backbone or improved reward model, one would have to repeat the full large-scale training. Therefore, existing works train at much smaller scales, either reusing diffusion features~\cite{xian2026consistent, zhang2026diffusion, liu2026beyond, mi2025video} or initializing with pretrained reward models~\cite{zhang2024confronting, liang2025aesthetic, zhao2026latsearch, vysotskyi2026critic} to reduce cost. Unfortunately, this leads to inferior accuracy and generalization compared to foundation-scale reward models defined in pixel space~\cite{wu2023human, xu2023imagereward, wang2025unified, radford2021learning, ma2025hpsv3}.
Consequently, the trend has been to fall back to Tweedie or MC approximations, whereas direct value models were largely sidelined.

Here, we propose \textbf{StitchVM} (\textbf{Stitched Value Model}), a framework that transfers the capabilities of pretrained reward models into the noisy latent regime with only a small finetuning cost.
Building on model stitching~\cite{lenc2015understanding, csiszarik2021similarity, yang2022deep, bansal2021revisiting, pan2023stitchable}, our approach combines a truncated frozen diffusion backbone as "head"---natively able to handle noisy latents~\cite{lee2025decoupled, xian2026consistent}---with a sliced pretrained 
reward model as "tail", via a lightweight stitching layer 
(Fig.~\ref{fig:overview}). 
The key is to identify a stitch point where the \emph{representations are compatible}. One way to ensure that is to find layers where the diffusion features of the head can (almost) be mapped to the reward features of the tail with a linear transformation. Since the mapping can be fitted in closed form and the remaining representation gap is small, a short finetuning is sufficient to close the gap without harming the predictive skill of the reward model.  In this way, the stitched model inherits the capability to predict the reward, but is able to operate directly on noisy latents and thus to serve as a value model.

StitchVM is remarkably effective with a range of different diffusion backbones (SD 3.5 Medium~\cite{esser2024scaling, stabilityai2024sd35}, SD 3.5 Large~\cite{esser2024scaling, stabilityai2024sd35}, FLUX~\cite{blackforestlabs2024flux1dev}) and reward models (DFN-CLIP~\cite{fang2024data}, CLIP~\cite{radford2021learning}, Aesthetic Score Predictor~\cite{schuhmann2022improvedaestheticpredictor}, HPSv2~\cite{wu2023human}).
With only a few unlabeled images and lightweight finetuning, the stitched models retain the benchmark performance of the underlying clean reward models while directly ingesting noisy latents. Notably, transferring ViT-L/14@336px CLIP into an SD 3.5 Medium value function takes only $\approx$10 hours on a single GH200 GPU.

We test the stitched value models with various alignment methods.
In case of inference-time alignment, the low-cost estimator for the value function lets each particle in FK steering~\cite{singhal2025a} pick the best of several local proposals at each step, making it more efficient than standard particle scaling. Alternatively, it can replace the long gradient paths of DPS~\cite{chung2023diffusion} with direct gradients from the value model, making the method $3.2\times$ faster and halving peak GPU memory, while at the same time improving quality.
For training-time alignment, our stitched value models enable training at intermediate noisy latents and avoid full rollouts: DiffusionNFT~\cite{zheng2026diffusionnft} becomes $2.3\times$ faster, while direct reward finetuning~\cite{dong2023raft, prabhudesai2023aligning} becomes $1.3\times$ faster, and more effective (e.g., $+30\%$ GenEval) through supervision at high-noise steps.

\section{Related Work}
\label{sec:related_work}

\textbf{Alignment methods and value function. }
Most existing inference-time alignment methods evaluate the value function on noisy latents \emph{indirectly}, through approximations, in order to leverage pixel-level rewards.
The \emph{Tweedie approximation}~\cite{chung2023diffusion, song2023loss, efron2011tweedie} forms the basis of many guidance and sequential Monte Carlo methods~\cite{ye2024tfg, yu2023freedom, he2024manifold, kim2025flowdps, song2023pseudoinverse, singhal2025a, kim2026inferencetime, wu2023practical, kim2025testtime, bansal2023universal, han2024trainingfree}, where the estimated clean sample is used either to compute guidance gradients or to weight particles.
The \emph{Monte Carlo approximation}~\cite{uehara2025inference, li2024derivative} instead evaluates the value function by averaging rewards over multiple denoising rollouts, as in SVDD~\cite{li2024derivative} and search-based methods such as DSearch~\cite{li2025dynamic}.
Training-time alignment follows a similar paradigm. Direct reward finetuning~\cite{prabhudesai2023aligning, clark2024directly, wu2024deep} propagates terminal rewards through denoising trajectories, while PPO-style methods~\cite{black2024training, fan2023reinforcement, miao2024training, liu2025flow, xue2025dancegrpo} optimize policy objectives over sampled trajectories.

To avoid these approximations, several works propose to learn the value function directly for the noisy latent. 
These models have been used to improve credit assignment in PPO-style post-training~\cite{zhang2024confronting, vysotskyi2026critic}, provide reward feedback at high-noise timesteps in direct reward finetuning~\cite{mi2025video}, and reduce rollout cost in search-based inference~\cite{zhao2026latsearch}.
However, these value models are typically trained with a narrow preference corpus or with task-specific labels, giving reliable signals only in a narrow domain.
We provide a broader discussion about alignment methods and the value function in Appendix~\ref{app_sec:extended_relatedwork}.

\noindent \textbf{Training value models and noisy latent reward models. }
A primary concern when learning a value model, or more broadly a noisy latent reward model, has been to avoid impractical large-scale training.
These efforts fall into two categories.
\emph{(1)~Diffusion-feature predictors} attach prediction heads~\cite{zhang2026diffusion, xian2026consistent, liu2026beyond, mi2025video} or LLM interfaces~\cite{bucciarelli2026tiny} to diffusion features.
While naturally noise-aware~\cite{lee2025decoupled, xian2026consistent}, their prediction heads are typically trained on narrow preference data and lack the broad generalization of foundational reward models. 
\emph{(2) Adaptation of pretrained reward models} takes one of two routes.
The first applies Tweedie-style one-step prediction~\cite{liang2025aesthetic} on top of clean-image reward models, but inherits Tweedie's bias.
The second learns projections from noisy latents to the input space of a pretrained reward model~\cite{ramos2025beyond, zhao2026latsearch, zhang2024confronting, vysotskyi2026critic}, introducing a distribution shift that small-data adaptation cannot fully bridge.
Consequently, no practically tractable scheme based on noisy latents has yet been able to match the broad zero-shot capability of pretrained pixel-space reward models.

\noindent\textbf{Model stitching.}
Originally introduced to study neural representations~\cite{lenc2015understanding}, model stitching recomposes the early layers of one neural network and the later layers of another one into a new network, usually with the help of an additional stitching layer. 
Beyond revealing similarities between representations that metrics such as CKA may miss~\cite{csiszarik2021similarity, bansal2021revisiting}, it has been shown that even networks with different architectures can often be stitched into hybrid models with minimal degradation~\cite{kornblith2019similarity}, enabling applications such as resource-constrained model reassembly~\cite{yang2022deep} and 
variable-scale network construction~\cite{pan2023stitchable}. 
Recent work has begun to apply stitching to generative models: VIST3A~\cite{go2026texttod} and VGGRPO~\cite{an2026vggrpo} stitch 3D reconstruction networks~\cite{wang2025vggt, jiang2025anysplat} onto \emph{clean} latents. 
We extend this idea to the \emph{noisy latent} regime and show that pretrained reward models can also be stitched directly to intermediate states of the denoising process.

\section{Preliminary}
\label{sec:prelim}

\subsection{Diffusion and Flow-based Models}
\label{subsec:diffusion_and_flow}

Let $\xb_0 \in \mathbb{R}^n \sim p_{\rm data}$ denote a clean data sample. We consider the flow matching (FM) framework~\cite{lipman2023flow,albergo2023building,liu2023flow} in latent space~\cite{rombach2022high}, where $\zb_0 = \Ec(\xb_0) \in \Rd^d$ is the clean latent and $\Ec$ is the encoder of the latent diffusion model. Throughout this work, $t = 0$ corresponds to the clean latent distribution $p_0$ (with $\zb_0 \sim p_0$) and $t = 1$ corresponds to the reference Gaussian $p_1 = \Nc(0, I_d)$ (with $\zb_1 = \bepsilon \sim p_1$); note that this is the reverse of the convention in~\cite{lipman2023flow}. We define a Gaussian conditional probability path:
\begin{align}
\label{eq:forward_path}
    \zb_t = \alpha_t\zb_0 + \sigma_t\bepsilon, \quad \bepsilon \sim \Nc(0, I_d) \Leftrightarrow p_t(\zb_t|\zb_0) = \Nc(\zb_t; \alpha_t\zb_0, \sigma_t^2 I_d),
\end{align}
where for FM, $\alpha_t = 1 - t, \sigma_t = t$. This induces a marginal probability path $p_t(\zb_t)$ which interpolates between $p_0$ and $p_1$. FM models learn the marginal velocity field $u_t(\zb_t) = \int u_t(\zb_t|\zb_0)p_{0|t}(\zb_0|\zb_t)\,d\zb_0$,
where $u_t(\zb_t|\zb_0) = \bepsilon - \zb_0$ is the conditional velocity. To sample, one can resort to ODE $\frac{d}{dt}\zb_t = u_t(\zb_t)$, SDE~\cite{song2021scorebased}, or discrete transition kernels~\cite{ho2020denoising,holderrieth2025glass}. See Appendix~\ref{appendix:subsec:reparam} for further discussion.

\subsection{Alignment as reward tilting}
\label{subsec:alignment_as_reward_tilting}
Pretraining aims to model the data distribution. In many applications, however, we do not simply want likely samples—we seek samples that also score highly under some reward function\footnote{In practice, the reward function may include further inputs such as a prompt, we omit them for brevity.} $r(\xb_0): \Rd^n \mapsto \Rd$ that encodes task-specific notions of sample quality, including prompt alignment~\cite{radford2021learning}, aesthetics~\cite{schuhmann2022improvedaestheticpredictor}, human preference~\cite{wu2023human,xu2023imagereward}, and physical consistency~\cite{uehara2025inference,chung2023diffusion, park2025steerx}.
A standard way to formalize alignment is through the reward-tilted target distribution~\cite{uehara2025inference}
\vspace{-0.1cm}
\begin{align}
\label{eq:reward_tilted_dist}
    p^\star(\xb_0)
    =
    \frac{1}{Z_\xb}p(\xb_0)\exp\!\left(r(\xb_0)\right)  \quad \mbox{or, equivalently} \quad
    p^\star(\zb_0) = \frac{1}{Z_\zb} p(\zb_0)\exp\left(
    r\left(\Dc(\zb_0)\right)
    \right),
\end{align}
where $p(\xb_0), p(\zb_0)$ are the base prior distributions from pretraining, $\Dc$ is the decoder, and $Z_\xb, Z_\zb$ are the partition functions.
While the reward is normally defined after decoding with $\Dc$, we often omit it and simply denote $r(\zb_0)$ for simplicity.
Although the generation corresponds to a trajectory through time, starting at $t=1$, the reward is only defined at the terminal $t = 0$. Therefore, it is useful to define the \emph{soft value function}:
\begin{align}
\label{eq:value_fn}
    V_t(\zb_t) :=
    \log\Ed\!\left[\exp(r(\zb_0)) \mid \zb_t\right],
\end{align}
where the expectation is over $\zb_0 \sim p_{0|t}(\zb_0 \mid \zb_t)$.
Value functions can be used in both inference-time steering and post-training, as discussed next.

\noindent\textbf{Inference with gradient guidance.}
One can show (see Appendix~\ref{appendix:subsec:gradient_guidance}) that by modifying the velocity:
\begin{align}
\label{eq:gradient_guidance}
    u_t^r(\zb_t) = u_t(\zb_t) + c_t \nabla_{\zb_t} V_t(\zb_t),
\end{align}
with $c_t$ some constant, one can sample from the tilted distribution in Eq.~\eqref{eq:reward_tilted_dist}. As $V_t(\zb_t)$ is intractable, widely used gradient guidance methods~\cite{chung2023diffusion,bansal2023universal,yu2023freedom} leverage Tweedie approximation, \ie $V_t(\zb_t) \approx r(\Ed[\zb_0|\zb_t])$, which incurs a bias known as the Jensen gap~\cite{chung2023diffusion}.

\noindent\textbf{Inference with particle sampling. }
Methods based on sequential Monte Carlo and search-based methods~\cite{kim2026inferencetime, li2024derivative, wu2023practical, kim2025testtime, li2025dynamic, zhang2025inferencetime, skreta2025feynmankac} evaluate the approximated value function of each particle and probabilistically decide whether to keep the particle or not. Several works again resort to the Tweedie approximation~\cite{wu2023practical,kim2025testtime,singhal2025a}. Others~\cite{li2024derivative} approximate the value function with Monte Carlo (MC) samples, \ie $V_t(\zb_t) \approx \log\big(\frac{1}{N}\sum_{i=1}^N \exp\big(r(\zb_{0,i})\big)\big), \zb_{0,i} \sim p_{0|t}(\zb_0|\zb_t)$. The former approaches again introduce bias, whereas the MC sampling leads to \emph{high variance}, and requires a lot of compute.

\noindent\textbf{Training with reinforcement learning (RL). }
The KL-regularized RL objective
\begin{align}
\label{eq:rl_objective}
    \argmax_{\theta} \Ed_{\zb_0 \sim p_\theta} [r(\zb_0)] - D_{KL}(p_\theta||p)
\end{align}
yields the tilted distribution in Eq.~\eqref{eq:reward_tilted_dist}.
Diffusion sampling can be regarded as a Markov Decision Process~\cite{black2024training}. Existing works that leverage RL post-training for diffusion models aim to optimize for Eq.~\eqref{eq:rl_objective} or variants of it~\cite{clark2024directly,liu2025flow,zheng2026diffusionnft}. Similar to inference-time methods, RL post-training also requires evaluation of the value function, which is normally approximated through MC roll-outs that tend to be unstable and incur high variance. See Appendix \ref{appendix:subsec:rl} for a discussion.
\section{Methodology}

In this section, we present \textbf{StitchVM}, a stitching-based framework that inherits the strong capability of pretrained reward models in the noisy latent regime at small finetuning cost (Section~\ref{sec:method_ StitchVM}). 
We then show how StitchVM improves inference-time (Section~\ref{sec:method_inference}) and training-time (Section~\ref{sec:method_training}) alignment.

\subsection{StitchVM}
\label{sec:method_ StitchVM}

Diffusion backbones natively process noisy latents and extract useful features from them~\cite{lee2025decoupled, xian2026consistent}; while pretrained reward models, trained at foundation model scale, output precise, task-relevant rewards for a broad range of clean images. 
StitchVM combines the two through a lightweight stitching layer that aligns the diffusion features with the reward model's feature space. Specifically, given stitching indices $(i,j)$, the stitched value model is defined as:
\vspace{-0.1cm}
\begin{equation}
\label{eq:stitching_model_architecture}
    V_\omega^{(i, j)}(\zb_t)
    =
    r_\phi^{\geq j}
    \left(
        s_\psi\left(
            u_\theta^{\leq i}(\zb_t)
        \right)
    \right),
\end{equation}
where $u_\theta^{\leq i}$ and $r_\phi^{\geq j}$ denote the diffusion backbone truncated at layer $i$ and the reward model starting from layer $j$, respectively, and $s_\psi$ is the stitching layer.

\noindent\textbf{Stage 1: Selecting the stitching interface. }
A key decision is at which indices $(i,j)$ to stitch, \ie where to hand over from the diffusion model to the reward model.
To identify an interface with compatible representations, we exhaustively search a set of candidate indices.
Given a clean image $\xb_0$ and its latent $\zb_0=\Ec(\xb_0)$, we sample $\zb_t \sim p_t(\zb_t|\zb_0)$ using Eq.~\eqref{eq:forward_path} and extract paired features $u_\theta^{\leq i}(\zb_t)$ and $r_\phi^{\leq j-1}(\zb_0)$.
For each candidate pair $(i,j)$, we fit a linear mapping $W$ by feature matching:
\vspace{-0.2cm}
\begin{equation}
\label{eq:stitch_fit}
    W_{i,j}^{\star}
    =
    \arg\min_W
    \mathbb{E}_{\zb_0,t,\bepsilon}
    \left[
        \left\|
            W u_\theta^{\leq i}(\zb_t)
            -
           r_\phi^{\leq j-1}(\zb_0)
        \right\|_2^2
    \right].
\end{equation}
The optimization can be solved in closed form, making it practical to evaluate many candidate pairs.
We then select the pair $(i^\star,j^\star)$ with the lowest feature-matching loss.

\noindent \textbf{Stage 2: Finetuning StitchVM. }
Since $(i^\star, j^\star)$ are already chosen such that the representations are maximally compatible, the diffusion features (after linear transformation) lie close to the features expected by $r_\phi^{\geq j}$, leaving only a small mismatch. 
A short finetuning of the stitching layer $s_\psi$ and the truncated reward model $r_\phi^{\geq j}$ suffices to compensate that mismatch, without degrading the reward model's performance.

We finetune the stitched model using unlabeled clean images $\zb_0$.
For each $\zb_0$, we sample a noisy latent $\zb_t \sim p_t(\zb_t|\zb_0)$ from the forward process and use the score $r_\phi(\zb_0)$ of the original reward model as the supervision target:
\vspace{-0.2cm}
\begin{equation}
    \mathcal{L}_{\mathrm{value}}(\omega)
    =
    \mathbb{E}_{\zb_0,\, t,\, \bepsilon}
    \left[
        \left\|
           V_\omega^{(i^\star, j^\star)}(\zb_t)
            -
            r_\phi(\zb_0)
        \right\|_2^2
    \right].
    \label{eq:value_training}
\end{equation}

One can show that the minimizer of Eq.~\eqref{eq:value_training} satisfies $V_{\omega^\star}^{(i^\star,j^\star)}(\zb_t) = \Ed[r_\phi(\zb_0)\mid\zb_t]$, \ie the value function. 
It is worth mentioning some design choices regarding Eq.~\eqref{eq:value_training}. 
First, while on-policy regression is possible~\cite{li2024derivative}, we opt for an off-policy objective to further save compute\footnote{Off-policy and on-policy objectives have the same minimizer when the diffusion policy is exact.}. 
Second, we choose to regress the standard value function rather than the soft value in Eq.~\eqref{eq:value_fn}, as this resulted in more stable training.
One can show that, in terms of the reward scale, the two match to leading order. See Appendix~\ref{appendix:off_policy_value_model_training} for the proofs and further discussion.

Further details for the stitching architecture are given in Appendix~\ref{appendix:svm_details}.

\subsection{Inference-Time Alignment with StitchVM}
\label{sec:method_inference}
StitchVM eliminates the additional denoiser and decoder evaluations required by the Tweedie approximation (Fig.~\ref{fig:overview}). 
We show that, due to this saving, it improves Diffusion Posterior Sampling (DPS)~\cite{chung2023diffusion} in both quality and speed, and Feynman-Kac (FK) steering~\cite{singhal2025a} in quality.

\noindent\textbf{DPS. }
DPS modifies the denoising velocity using the gradient of the value function $\nabla_{\zb_t} V_t(\zb_t)$, as in Eq.~\eqref{eq:gradient_guidance}.
Since the true value function is intractable, previous DPS schemes rely on the Tweedie approximation to estimate this gradient.
As a result, the guidance signal inherits the bias of Tweedie and requires a long backpropagation chain through the reward model, the VAE decoder, and the denoiser.
We instead use the gradient of StitchVM, $\nabla_{\zb_t} V_\omega^{(i^\star, j^\star)}(\zb_t)$, computed directly in the noisy latent space. 
This yields more accurate guidance, especially in high-noise regions while avoiding long backpropagation chains, making DPS both more effective and more efficient. Note the similarity to classifier guidance~\cite{dhariwal2021diffusion}.

\noindent \textbf{FK steering. }
Given particles $\{\zb_{t_k}^n\}_{n=1}^N$, FK steering draws proposals $\bar{\zb}_{t_{k-1}}^n \sim p_{t_{k-1}|t_k}(\zb_{t_{k-1}}^n \mid \zb_{t_k}^n)$ and computes potentials using a value function, e.g.,
$G(\zb_{t_k}^n, \bar{\zb}_{t_{k-1}}^n)=\exp(V_t(\bar{\zb}_{t_{k-1}}^n) - V_t(\zb_{t_k}^n))$.
The next particles are then obtained by resampling according to:
\begin{align}
\label{eq:FK_steering}
    a_{t_{k-1}}^n \sim \mathrm{Multinomial}(G(\zb_{t_k}^1, \bar{\zb}_{t_{k-1}}^1), \ldots, G(\zb_{t_k}^N, \bar{\zb}_{t_{k-1}}^N)), \quad 
    \zb_{t_{k-1}}^n=\bar{\zb}_{t_{k-1}}^{a_{t_{k-1}}^n}.
\end{align}
In text-to-image generation, FK steering estimates the value function via the Tweedie approximation, requiring a denoiser evaluation and VAE decoding for each particle.

In contrast, StitchVM evaluates the value function with a single forward pass of $V_\omega^{(i^\star, j^\star)}$, avoiding both full denoiser inference and VAE decoding.
Value function estimates become substantially cheaper, allowing us to increase the number of proposals per particle without significantly inflating compute.
This exposes a new scaling axis: rather than increasing the number of particles $N$, we can increase the number of local proposals $M$ per particle and select among them using the cheap StitchVM score.
Concretely, at a designated set of steps, each particle spawns $M$ proposals $\zb_{t_{k-1}}^{n,m} \sim p_{t_{k-1}|t_k}(\zb_{t_{k-1}}^n \mid \zb_{t_k}^n)$ for $m=1,\ldots,M$. We then select the best proposal under StitchVM, $\bar{\zb}_{t_{k-1}}^n = \zb_{t_{k-1}}^{n,m_n^\star}$ with $m_n^\star = \arg\max_{m} V_\omega^{(i^\star,j^\star)}(\zb_{t_{k-1}}^{n,m})$.
For further details, see Appendix~\ref{appendix:fksteering}.

\subsection{Training-Time Alignment with StitchVM}
\label{sec:method_training}

Training-time alignment methods require full denoising rollouts to evaluate the reward at the final, clean image. 
Our StitchVM instead enables rollouts to stop at intermediate, noisy latents while still providing supervision through direct evaluation of the value function.
We show how this improves and accelerates direct reward finetuning~\cite{prabhudesai2023aligning,dong2023raft}, and also accelerates DiffusionNFT~\cite{zheng2026diffusionnft}.

\noindent \textbf{AlignProp \& DRaFT. }
To maximize the objective in Eq.~\eqref{eq:rl_objective}, direct reward finetuning methods like AlignProp~\cite{prabhudesai2023aligning} and DRaFT~\cite{dong2023raft} roll out the full denoising trajectory to obtain $\xb_0$ and backpropagate the reward gradient through the chain. 
In practice, this backpropagation is memory-intensive and prone to unstable or exploding gradients, so existing methods often restrict it to the final, low-noise steps. Our StitchVM avoids both issues: at each training iteration, we randomly sample a stopping timestep $\tau$, halt the denoising at $\zb_\tau$, and backpropagate using the StitchVM prediction $V_\omega^{(i^\star, j^\star)}(\zb_\tau)$ in place of the terminal reward.
This delivers effective supervision even in high-noise regions while avoiding long backpropagation chains. For further details, see Appendix~\ref{appendix:value_based_drft}.

\noindent \textbf{DiffusionNFT. }
DiffusionNFT~\cite{zheng2026diffusionnft} performs online RL on the forward process via flow matching, using terminal rewards from complete generations to define positive and negative samples, and thus an implicit direction for improving the policy. 
With StitchVM, we instead stop the generation early at an intermediate, noisy latent $\zb_\tau$, evaluate its value function as $V_\omega^{(i^\star, j^\star)}(\zb_\tau)$, and use that value in place of the terminal reward. 
This allows us to keep the original reward-weighted forward-process regression of DiffusionNFT, while moving supervision from clean, terminal outputs to intermediate, noisy latents. 
This provides supervision without having to generate all the way to a clean sample, substantially improving training efficiency. For further details, see Appendix~\ref{appendix:value_based_nft}.

\section{Experiments}
Here, we show that StitchVM transfers pixel-space reward models into value models on noisy latents while inheriting the reward model's capability (Section~\ref{sec:results_stitched_value_model}).
We then demonstrate that our StitchVM improves both inference-time (Section~\ref{sec:results_inference_time}) and training-time (Section~\ref{sec:results_training_time}) alignment methods.

\begin{figure*}[t!]
\centering
\vspace{-0.1cm}
\begin{subfigure}[t]{\textwidth}
    \centering
    \includegraphics[width=\textwidth]{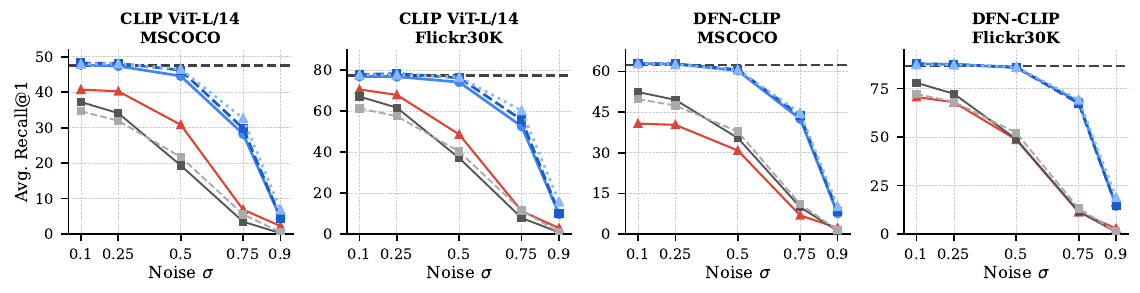}
    \vspace{-0.6cm}
    \caption{Zero-shot image-text retrieval (Avg.\ Recall@1) on MSCOCO and Flickr30K.}
    \vspace{-0.05cm}
    \label{fig:stitchvm_retrieval}
\end{subfigure}
\begin{subfigure}[t]{0.49\textwidth}
    \centering
    \includegraphics[width=\textwidth]{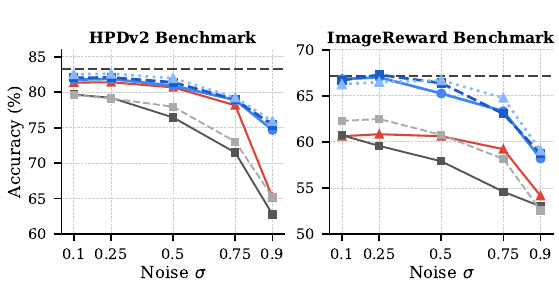}
    \vspace{-0.6cm}
    \caption{Preference accuracy on HPDv2 and ImageReward.}
    \label{fig:stitchvm_preference}
\end{subfigure}%
\hfill
\vspace{-0.2cm}
\begin{subfigure}[t]{0.26\textwidth}
    \centering
    \includegraphics[width=\textwidth]{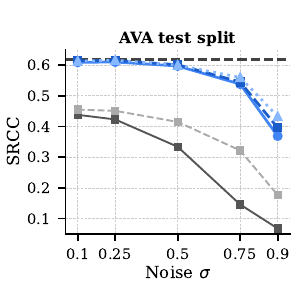}
    \vspace{-0.6cm}
    \caption{Aesthetic SRCC on AVA.}
    \label{fig:stitchvm_aesthetic}
\end{subfigure}%
\hfill
\begin{subfigure}[t]{0.22\textwidth}
    \centering
    \includegraphics[width=\textwidth]{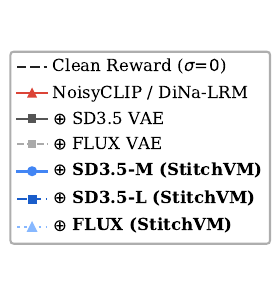}
\end{subfigure}
\vspace{0.2cm}
\caption{\textbf{Results of StitchVM on latents with different noise levels.}
$\oplus$ denotes stitching of a reward model with a pretrained diffusion module
(VAE encoder or DiT).}
\label{fig:stitchvm_results}
\vspace{-0.45cm}
\end{figure*}

\subsection{Main Results: StitchVM Performance}
\label{sec:results_stitched_value_model}

We evaluate the proposed StitchVM across three diffusion backbones---SD 3.5 Medium, SD 3.5 Large~\cite{esser2024scaling, stabilityai2024sd35}, and FLUX.1-dev~\cite{blackforestlabs2024flux1dev}---and four reward models: OpenAI CLIP (ViT-L/14, 336px)~\cite{radford2021learning}, DFN-CLIP (ViT-H/14, 378px)~\cite{fang2024data}, HPSv2~\cite{wu2023human}, and the Aesthetic predictor~\cite{schuhmann2022improvedaestheticpredictor}.
All StitchVMs are trained for 5 epochs on unlabeled images from AVA~\cite{murray2012ava} and HPDv2~\cite{wu2023human}.
We evaluate each model on noisy latents drawn from the flow matching forward process at $\sigma \in \{0.1,0.25,0.5,0.75,0.9\}$.
For CLIP-based models, we report zero-shot cross-modal retrieval on MSCOCO~\cite{lin2014microsoft} and Flickr30K~\cite{young2014image}; for HPSv2, preference accuracy on ImageReward~\cite{xu2023imagereward} and HPDv2~\cite{wu2023human}; and for the Aesthetic predictor, SRCC on the AVA test split following~\cite{hentschel2022clip}.

We compare against three baselines.
First, we adapt VIST3A~\cite{go2026texttod}, which stitches to a VAE decoder rather than the diffusion backbone.
Second, for CLIP-based reward models, we reimplement and train NoisyCLIP~\cite{ramos2025beyond} at scale on LAION-400M~\cite{schuhmann2021laion}.
Third, for HPSv2, we compare with DiNa-LRM~\cite{liu2026beyond}, a diffusion-feature reward model trained on HPDv3 preference data~\cite{ma2025hpsv3}. 
Additional details are provided in Appendix~\ref{appendix:StitchVM_experiments}.

We report the main results in Fig.~\ref{fig:stitchvm_results} (full table in Appendix~\ref{appendix:stitchvm_full_results}) and organize the findings as follows.

\noindent \textbf{(1) Our StitchVM retains reward model capability on noisy latents. }
At low noise ($\sigma \leq 0.5$), StitchVM closely matches the performance of the original clean reward models across CLIP retrieval, HPSv2 preference prediction, and aesthetic prediction.
As the noise level increases, StitchVM exhibits a gradual performance decline and remains substantially more robust than the baselines.
Thus, StitchVM effectively converts existing pixel-space reward models into noisy latent value models, preserving their original capability while gaining robustness to intermediate noisy latents.

\noindent \textbf{(2) Diffusion features are critical for robust transfer to noisy latents. }
StitchVM substantially outperforms the VAE stitching baseline across all noise levels, with the gap becoming especially large at high noise, where VAE stitching collapses.
This comparison highlights the role of diffusion features: both methods stitch a pretrained reward model into the latent regime, but VAE stitching relies only on clean latent space, whereas StitchVM uses diffusion features that are trained to process noisy latents.
This supports our strategy of stitching diffusion models with reward models.


\noindent \textbf{(3) StitchVM outperforms noisy latent retraining and preference-data training. }
StitchVM outperforms NoisyCLIP~\cite{ramos2025beyond} for CLIP-based reward models, despite NoisyCLIP being trained at a much larger scale on LAION-400M~\cite{schuhmann2021laion}.
This shows that transferring a pretrained reward model through stitching is more effective than retraining the model on noisy latents from scratch.
For HPSv2, StitchVM also outperforms DiNa-LRM~\cite{liu2026beyond} on both HPDv2 and ImageReward, despite using only unlabeled images rather than larger HPDv3 preference dataset~\cite{ma2025hpsv3}.
Together, these comparisons show that StitchVM achieves robust noisy latent value function prediction by transferring the capability of pretrained reward models, without large-scale retraining or preference-label supervision.

\subsection{Results on Inference-Time Methods}
\label{sec:results_inference_time}

We evaluate StitchVM-enhanced DPS and FK steering (Section~\ref{sec:method_inference}) with HPSv2, the Aesthetic predictor, and CLIP-based reward models.
We measure ImageReward, Aesthetic Score, HPSv2, and PickScore~\cite{kirstain2023pick} on generation from DrawBench~\cite{saharia2022photorealistic} prompts, and additionally report GenEval~\cite{ghosh2023geneval} for FK steering.
Additional details are provided in Appendix~\ref{appendix:inference_time_alignment_experiments}.

\begin{table*}[t]
\centering
\scriptsize
\setlength{\tabcolsep}{2pt}
\renewcommand{\arraystretch}{1.1}
\caption{\textbf{Results of DPS with StitchVM on DrawBench.}
ImgRwd: ImageReward, Aes: Aesthetic, Pick: PickScore, Mem: peak GPU memory (GB), Time: seconds per sample.}
\vspace{-0.1cm}
\label{tab:dps_StitchVM}
\resizebox{0.99\textwidth}{!}{%
\begin{tabular}{l *{7}{c} *{7}{c}}
\toprule
& \multicolumn{7}{c}{\textbf{SD3.5-Medium}}
& \multicolumn{7}{c}{\textbf{SD3.5-Large}} \\
\cmidrule(lr){2-8}\cmidrule(lr){9-15}
Method
& ImgRwd & Aes & HPSv2 & Pick & CLIP & Mem $\downarrow$ & Time $\downarrow$
& ImgRwd & Aes & HPSv2 & Pick & CLIP & Mem $\downarrow$ & Time $\downarrow$ \\
\midrule
Flow baseline
 & 0.95 & 5.31 & 0.283 & 22.56 & 28.86 & \textemdash & \textemdash
 & 1.07 & 5.44 & 0.294 & 22.82 & 29.07 & \textemdash & \textemdash \\
\midrule
\rowcolor{mygrey} \multicolumn{15}{l}{\textit{HPSv2 Reward}} \\
\quad DPS
 & 0.96 & 5.27 & 0.335 & 23.02 & 28.94 & 56.4 & 52.8
 & 1.01 & 5.38 & 0.344 & \textbf{23.31} & 28.68 & 89.3 & 84.6 \\
\rowcolor{GoogleBlue!20} \quad DPS + StitchVM
 & \textbf{1.22} & \textbf{5.43} & \textbf{0.348} & \textbf{23.03} & \textbf{28.95} & \textbf{26.0} & \textbf{16.5}
 & \textbf{1.20} & \textbf{5.45} & \textbf{0.356} & 23.13 & \textbf{28.97} & \textbf{43.2} & \textbf{36.3} \\
\midrule
\rowcolor{mygrey} \multicolumn{15}{l}{\textit{Aesthetic Reward}} \\
\quad DPS
 & 0.82 & 5.73 & 0.280 & \textbf{22.47} & 28.16 & 54.3 & 54.2
 & 0.91 & 5.71 & 0.280 & 22.41 & 27.95 & 87.3 & 84.8 \\
\rowcolor{GoogleBlue!20} \quad DPS + StitchVM
 & \textbf{0.98} & \textbf{5.87} & \textbf{0.282} & 22.44 & \textbf{28.43} & \textbf{23.4} & \textbf{14.7}
 & \textbf{1.02} & \textbf{5.76} & \textbf{0.294} & \textbf{22.79} & \textbf{28.80} & \textbf{40.7} & \textbf{36.7} \\
\midrule
\rowcolor{mygrey} \multicolumn{15}{l}{\textit{CLIP Reward}} \\
\quad DPS
 & 0.50 & 4.80 & 0.246 & 21.54 & 33.01 & 54.6 & 52.2
 & 0.72 & 5.04 & 0.263 & 22.03 & 32.55 & 87.5 & 83.1 \\
\rowcolor{GoogleBlue!20} \quad DPS + StitchVM
 & \textbf{0.68} & \textbf{4.82} & \textbf{0.249} & \textbf{21.63} & \textbf{33.12} & \textbf{23.9} & \textbf{14.6}
 & \textbf{1.05} & \textbf{5.20} & \textbf{0.279} & \textbf{22.44} & \textbf{32.95} & \textbf{41.5} & \textbf{36.3} \\
\bottomrule
\end{tabular}}
\vspace{-0.2cm}
\end{table*}

\begin{table*}[t]
\centering
\scriptsize
\setlength{\tabcolsep}{2pt}
\renewcommand{\arraystretch}{1.1}
\caption{\textbf{Results of FK steering with StitchVM on DrawBench and GenEval.}
We set $N=4$ (number of particles).
ImgRwd: ImageReward, Aes: Aesthetic, Pick: PickScore.}
\vspace{-0.1cm}
\label{tab:fk_steering}
\resizebox{\textwidth}{!}{%
\begin{tabular}{l *{5}{c} *{5}{c} *{5}{c}}
\toprule
& \multicolumn{5}{c}{\textbf{SD3.5-Medium}}
& \multicolumn{5}{c}{\textbf{SD3.5-Large}}
& \multicolumn{5}{c}{\textbf{FLUX}} \\
\cmidrule(lr){2-6}\cmidrule(lr){7-11}\cmidrule(lr){12-16}
Method
& ImgRwd & Aes & HPSv2 & Pick & GenEval
& ImgRwd & Aes & HPSv2 & Pick & GenEval
& ImgRwd & Aes & HPSv2 & Pick & GenEval \\
\midrule
Flow baseline
 & 0.88 & 5.34 & 0.282 & 22.55 & 0.62
 & 1.01 & 5.51 & 0.293 & 22.86 & 0.65
 & 1.06 & 5.80 & 0.301 & 22.92 & 0.62 \\
\midrule
\rowcolor{mygrey} \multicolumn{16}{l}{\textit{HPSv2 Reward}} \\
\quad BoN
 & 0.91 & 5.26 & 0.284 & 22.28 & 0.63
 & \textbf{1.24} & 5.41 & 0.308 & 22.96 & 0.68
 & 1.15 & \textbf{5.78} & 0.314 & 23.13 & 0.65 \\
\quad FKS
 & 0.93 & 5.26 & 0.283 & 22.24 & 0.62
 & 1.22 & 5.39 & 0.308 & 22.96 & 0.68
 & 1.17 & 5.76 & 0.314 & 23.10 & 0.66 \\
\rowcolor{GoogleBlue!20} \quad FKS + StitchVM  
 & \textbf{1.10} & \textbf{5.41} & \textbf{0.303} & \textbf{22.86} & \textbf{0.69}
 & 1.20 & \textbf{5.52} & \textbf{0.310} & \textbf{23.11} & \textbf{0.70}
 & \textbf{1.18} & 5.74 & \textbf{0.318} & \textbf{23.22} & \textbf{0.68} \\
\midrule
\rowcolor{mygrey} \multicolumn{16}{l}{\textit{Aesthetic Reward}} \\
\quad BoN
 & 0.73 & 5.47 & 0.270 & 22.12 & 0.56
 & \textbf{1.06} & 5.62 & 0.295 & 22.76 & 0.66
 & \textbf{1.07} & 5.98 & \textbf{0.301} & 22.86 & 0.61 \\
\quad FKS
 & 0.65 & 5.46 & 0.268 & 22.00 & 0.55
 & 1.02 & 5.59 & 0.293 & 22.72 & 0.63
 & 0.99 & 5.94 & 0.299 & 22.82 & 0.60 \\
\rowcolor{GoogleBlue!20} \quad FKS + StitchVM  
 & \textbf{0.99} & \textbf{5.58} & \textbf{0.289} & \textbf{22.65} & \textbf{0.65}
 & 1.03 & \textbf{5.68} & \textbf{0.298} & \textbf{22.87} & \textbf{0.68}
 & 1.01 & \textbf{6.00} & \textbf{0.301} & \textbf{22.88} & \textbf{0.64} \\
\midrule
\rowcolor{mygrey} \multicolumn{16}{l}{\textit{CLIP Reward}} \\
\quad BoN
 & 0.73 & 5.13 & 0.266 & 22.00 & 0.58
 & 1.16 & 5.32 & 0.294 & 22.75 & 0.67
 & 1.09 & 5.73 & 0.301 & 22.94 & 0.63 \\
\quad FKS
 & 0.79 & 5.11 & 0.267 & 22.03 & 0.59
 & 1.18 & 5.30 & 0.295 & 22.79 & 0.68
 & 1.08 & 5.71 & 0.302 & 22.96 & 0.64 \\
\rowcolor{GoogleBlue!20} \quad FKS + StitchVM  
 & \textbf{0.96} & \textbf{5.26} & \textbf{0.282} & \textbf{22.53} & \textbf{0.68}
 & \textbf{1.20} & \textbf{5.40} & \textbf{0.298} & \textbf{22.96} & \textbf{0.71}
 & \textbf{1.11} & \textbf{5.75} & \textbf{0.303} & \textbf{23.00} & \textbf{0.65} \\
\bottomrule
\end{tabular}}
\vspace{-0.3cm}
\end{table*}

\noindent\textbf{(1) Our StitchVM makes DPS both more effective and more efficient. }
Table~\ref{tab:dps_StitchVM} shows that replacing the Tweedie-approximated DPS gradient with the StitchVM gradient improves DPS across nearly all reward--metric pairs on both SD~3.5 Medium and SD~3.5 Large, with only minor exceptions on PickScore.
At the same time, StitchVM substantially reduces inference cost: peak GPU memory drops by about \(50\%\) (\eg \(56.4 \to 26.0\) GB on SD~3.5 Medium), and sampling becomes up to \(3.2\times\) faster (\(52.8 \to 16.5\) s/sample).
These gains come from the same source: StitchVM provides direct noisy latent gradients, avoiding both Tweedie approximation bias in high-noise regions and backpropagation through the denoiser, VAE, and pixel-space reward model.

\noindent \textbf{(2) StitchVM improves FK steering. }
Table~\ref{tab:fk_steering} shows that FK steering with StitchVM outperforms both standard FK steering (FKS) and Best-of-$N$ (BoN) on most metrics, whereas FKS often fails to improve over BoN.
The gains are especially large on SD~3.5 Medium: under HPSv2 reward, FK steering with StitchVM improves ImageReward from $0.93$ (FKS) and $0.91$ (BoN) to $1.10$, and GenEval from $0.62$ to $0.69$.
These gains come from StitchVM's low-cost value function evaluation: each evaluation requires only partial DiT inference up to the stitching layer plus the stitched reward model, and this partial DiT computation is shared with the next denoising step, so the marginal cost of additional proposals is small.
In contrast, achieving the same effect of $M$ under the Tweedie approximation would require a full denoiser inference and VAE decoding for each proposal.

\noindent \textbf{(3) $M$-scaling vs.\ $N$-scaling on FK steering. }
This low marginal cost opens a second scaling axis beyond the standard particle count $N$: each particle spawns $M$ candidates and selects the best under StitchVM.
In Fig.~\ref{fig:m_scaling_fksteering_body}, we evaluate FK steering and its variant with StitchVM on FLUX with HPSv2 target reward by varying $N$ and $M$.
Across the compute range, FK steering with StitchVM lies above the standard $N$-scaling curve: for example, $(N{=}8, M{=}6)$ matches standard FKS at $N{=}14$ with $33\%$ lower cost.
This shows that increasing local proposals $M$ is a more computationally efficient axis than increasing $N$ alone.


\subsection{Results on Training-Time Methods}
\label{sec:results_training_time}

\newsavebox{\rltabbox}
\savebox{\rltabbox}{%
  \begin{minipage}{0.515\textwidth}
    \centering
    \scriptsize
    \setlength{\tabcolsep}{1.6pt}
    \captionof{table}{\textbf{Training-time alignment results, with joint DFN-CLIP + HPSv2 as the training reward.}}
    \vspace{-0.2cm}
    \renewcommand{\arraystretch}{1.10}
    \resizebox{\linewidth}{!}{%
    \begin{tabular}{l c ccccc}
    \toprule
    Method & GPU-h $\downarrow$ & HPSv2 & DFN & ImgRwd & Pick & GenEval \\
    \midrule
    Flow-GRPO-fast
     & 122.2 & 0.348 & 0.408 & 1.44 & 22.77 & 0.65 \\
    \midrule
    DRaFT-1
     & 128.1 & 0.308 & 0.379 & 0.89 & 22.65 & 0.53 \\
    DRaFT-3
     & 128.0 & 0.329 & 0.392 & 1.36 & 21.36 & 0.66 \\
    \rowcolor{GoogleBlue!20} DRaFT-1 + StitchVM
     & \textbf{94.8} & \textbf{0.348} & 0.418 & \textbf{1.47} & 23.06 & 0.69 \\
    \rowcolor{GoogleBlue!20} DRaFT-3 + StitchVM
     & 100.3 & 0.347 & \textbf{0.420} & \textbf{1.47} & \textbf{23.13} & \textbf{0.71} \\
    \midrule
    DiffusionNFT
     & 191.5 & \textbf{0.347} & 0.413 & \textbf{1.50} & 22.98 & 0.67 \\
    \rowcolor{GoogleBlue!20} DiffusionNFT + StitchVM
     & \textbf{84.7} & \textbf{0.347} & \textbf{0.414} & \textbf{1.50} & \textbf{23.06} & \textbf{0.68} \\
    \bottomrule
    \end{tabular}%
    }
    \vspace{-0.3cm}
    \label{tab:rl_finetune}
  \end{minipage}}
\begin{figure}[t]
\centering
\begin{minipage}[c]{0.46\textwidth}
\centering
\includegraphics[width=\linewidth]{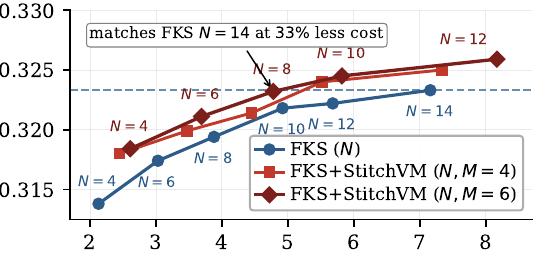}
\vspace{-0.6cm}
\captionof{figure}{\textbf{HPSv2 reward (target) vs.\ GPU-hours over 200 prompts on FK steering.}}
\vspace{-0.3cm}
\label{fig:m_scaling_fksteering_body}
\end{minipage}\hfill
\begin{minipage}[c]{0.52\textwidth}\usebox{\rltabbox}\end{minipage}
\end{figure}

We finetune SD3.5 Medium at \(512{\times}512\) resolution using DFN-CLIP and HPSv2 as training rewards. 
We compare DiffusionNFT and DRaFT-\(K\), which backpropagates through the final \(K\) denoising steps~\cite{clark2024directly}, against their StitchVM-based variants with \(K\in\{1,3\}\). 
We also include Flow-GRPO-Fast~\cite{liu2025flow} as a baseline.
As metrics, we report GenEval and ImageReward, PickScore, HPSv2, and DFN-CLIP scores on DrawBench. 
Additional details are provided in Appendix~\ref{appendix:training_time_alignment_experiments}.

\noindent \textbf{Results. }
Table~\ref{tab:rl_finetune} shows that StitchVM accelerates both DRaFT and DiffusionNFT, while additionally improving DRaFT's generation quality. 
By stopping rollouts at intermediate noisy latents and evaluating the value function directly, StitchVM reduces GPU-hours by 22--26\% for DRaFT and by over 55\% for DiffusionNFT. 
For DRaFT, this also improves quality: standard DRaFT restricts backpropagation to low-noise steps to avoid unstable gradients, whereas StitchVM provides direct value function supervision at intermediate latents, including high-noise regions. 
We further plot training curves in Appendix~\ref{appendix:additional_results_rl_convergence}. 
As shown in the results, StitchVM variants reach higher scores across all metrics with less compute.

\subsection{Analysis}
\label{sec:analysis}

\begin{wraptable}{r}{0.45\linewidth}
\centering
\vspace{-0.5cm}
\scriptsize
\setlength{\tabcolsep}{3pt}
\caption{\textbf{StitchVM training cost (GPU-h).}}
\vspace{-0.2cm}
\renewcommand{\arraystretch}{1.10}
\begin{tabular}{l rrr rrr}
\toprule
& \multicolumn{3}{c}{512$\times$512} & \multicolumn{3}{c}{1024$\times$1024} \\
\cmidrule(lr){2-4}\cmidrule(lr){5-7}
Reward & Train & Search & Total & Train & Search & Total \\
\midrule
Aesthetic & 6.4 & 0.6 & 7.0  & 23.1 & 1.1 & 24.2 \\
CLIP      & 9.3 & 0.7 & 10.0 & 23.5 & 1.0 & 24.5 \\
HPSv2     & 9.4 & 0.8 & 10.2 & 31.0 & 1.3 & 32.3\\
\bottomrule
\end{tabular}
\label{tab:stitchvm_compute}
\vspace{-0.5cm}
\end{wraptable}

\textbf{Training cost of StitchVM. }
Table~\ref{tab:stitchvm_compute} reports the total computational cost of StitchVM on SD~3.5~Medium with different reward models, measured on GH200 GPUs and including both the search for the stitching layer and the subsequent finetuning stage.
Each StitchVM takes only \({\approx}10\) GPU-hours at \(512{\times}512\) resolution, or \(24\)--\(32\) GPU-hours at \(1024{\times}1024\). 
The timings underline that the lightweight, one-time transfer procedure of StitchLM is a lot more efficient than large-scale retraining of a reward model.

\noindent \textbf{Additional results. } In Appendix~\ref{appendix:additional_results}, we further analyze the stitching layer search (Appendix~\ref{appendix:stitching_layer_analysis}), demonstrate that a smaller StitchVM can guide a larger generator (Appendix~\ref{appendix:smallgenerator}), and ablate when to stop rollouts for DiffusionNFT with StitchVM (Appendix~\ref{appendix:stop_step_distribution}).

\section{Conclusion}


We have presented \textbf{StitchVM}, a recipe for transferring high-end pixel-space reward models into value functions for the noisy latents of a diffusion or flow model.
The key idea is to stitch a frozen diffusion backbone to the pretrained reward model. To that end we find the most compatible layers in the two models, which by construction already includes fitting the stitching layer. After that, a light, self-supervised finetuning stage is sufficient to close the remaining representation gap.
The resulting StitchVMs closely match the underlying original reward models but can be evaluated at noisy latent precursors of the generated pixel images.
We have applied the stitched value functions both to inference-time alignment methods (FK steering and DPS) and to training-time alignment (direct reward finetuning and DiffusionNFT).
Across both settings, replacing expensive per-sample approximations with direct evaluation of the StitchVM value function improves efficiency while maintaining or even improving alignment quality.
More broadly speaking, StitchVM provides a generic template for combining a latent diffusion backbone with a pixel-space feedforward model without sacrificing the extensive pretraining of either.
We believe that this practice may have important applications beyond diffusion model alignment.


\section{Acknowledgments}
This work was supported by an unrestricted gift from Google, and by a grant from the Swiss National Supercomputing Centre within the Swiss AI Initiative.

{\small
  \bibliography{ref}
}

\clearpage
\appendix


\section{Proofs}
\label{appendix:proof}

We start by reviewing the sampling process of the flow-based models. A standard approach is to resort to ODE sampling:
\begin{align}
    \zb_1 \sim p_1, \quad \frac{d}{dt}\zb_t = u_t(\zb_t), \quad t : 1 \to 0 \Rightarrow \zb_t \sim p_t.
\end{align}
One can also resort to SDE sampling~\cite{song2021scorebased}:
\begin{align}
\label{eq:sde}
    \zb_1 \sim p_1, \quad d\zb_t = \left[
    u_t(\zb_t) - \frac{\nu_t^2}{2}\nabla_{\zb_t} \log p_t(\zb_t)
    \right]dt + \nu_t\, dW_t, \quad t : 1 \to 0,
\end{align}
where $dW_t$ is the standard Wiener process, and $\nabla_{\zb_t} \log p_t(\zb_t)$ is the score function, which can be obtained by simply reparametrizing $u_t$. The diffusion coefficient $\nu_t$ is a free parameter controlling the amount of injected noise; for FM with $\alpha_t = 1-t, \sigma_t = t$, a natural choice that preserves the marginals of the probability flow ODE is $\nu_t^2 = \frac{4t}{1-t}$, giving drift coefficient $\frac{2t}{1-t}$ on the score. See Section~\ref{appendix:subsec:reparam} for the full derivation.
In discrete time, for schedules $1=t_K > t_{K-1} > \cdots > t_0 = 0$, there exist other ways to sample from the data distribution with discrete transition kernels~\cite{ho2020denoising,holderrieth2025glass}, i.e.
\begin{align}
    p_0(\zb_0) = \int p_1(\zb_{t_K}) \left[
    \prod_{k = 1}^{K} p_{t_{k-1}|t_{k}}(\zb_{t_{k-1}}|\zb_{t_k})
    \right]\,d\zb_{t_{1:K}},
\end{align}
where $p_{t_{k-1}|t_{k}}(\zb_{t_{k-1}}|\zb_{t_k})$ is the discrete transition kernel.

\subsection{Reparametrization in Diffusion and Flow-based Models}
\label{appendix:subsec:reparam}
Here, we provide a brief clarification on why the reparametrization of the velocity field in FMs can retrieve the score function $\nabla_{\zb_t} \log p_t(\zb_t)$ and the posterior mean $\Ed[\zb_0|\zb_t]$. For general $\alpha_t$ and $\sigma_t$, the conditional vector field is given as~\cite{lipman2023flow}
\begin{align}
    u_t(\zb_t|\zb_0) = \frac{\dot{\sigma}_t}{\sigma_t}\zb_t + \left(\dot{\alpha}_t - \alpha_t\frac{\dot{\sigma}_t}{\sigma_t}\right)\zb_0.
\end{align}
\paragraph{Denoiser as velocity field reparametrization.}
We have that
\begin{align}
    u_t(\zb_t) &= \int u_t(\zb_t|\zb_0)\,p_{0|t}(\zb_0|\zb_t)\,d\zb_0 \\
    &= \frac{\dot{\sigma}_t}{\sigma_t}\zb_t + \left(\dot{\alpha}_t - \alpha_t\frac{\dot{\sigma}_t}{\sigma_t}\right) \int \zb_0\, p_{0|t}(\zb_0|\zb_t)\,d\zb_0 \\
    &= \frac{\dot{\sigma}_t}{\sigma_t}\zb_t + \left(\dot{\alpha}_t - \alpha_t\frac{\dot{\sigma}_t}{\sigma_t}\right) D_t(\zb_t),
\end{align}
where we denoted the posterior mean as the {\em denoiser} $D_t(\zb_t) := \Ed[\zb_0|\zb_t]$. Rearranging,
\begin{align}
\label{eq:denoiser_from_velocity}
    D_t(\zb_t) &= \frac{1}{\dot{\alpha}_t\sigma_t - \alpha_t\dot{\sigma}_t}\left(\sigma_t u_t(\zb_t) - \dot{\sigma}_t\zb_t\right).
\end{align}
\paragraph{Score function as velocity field reparametrization.}
Tweedie's formula establishes the relation between the posterior mean and the score function
\begin{align}
\label{eq:tweedie}
    D_t(\zb_t) = \Ed[\zb_0|\zb_t] = \frac{1}{\alpha_t}\left(
    \zb_t + \sigma_t^2\nabla_{\zb_t}\log p_t(\zb_t)
    \right).
\end{align}
Plugging Eq.~\eqref{eq:tweedie} into Eq.~\eqref{eq:denoiser_from_velocity} and rearranging yields
\begin{align}
\label{eq:velocity_score_reparam}
    u_t(\zb_t) = \frac{\dot{\alpha}_t}{\alpha_t}\zb_t + \frac{\tilde\nu_t^2}{2}\nabla_{\zb_t} \log p_t(\zb_t),
\end{align}
where we defined
\begin{align}
\label{eq:nu_tilde_def}
    \frac{\tilde\nu_t^2}{2} := \frac{\sigma_t(\dot{\alpha}_t\sigma_t - \alpha_t\dot{\sigma}_t)}{\alpha_t} = \sigma_t^2\frac{d}{dt}\log\frac{\alpha_t}{\sigma_t}.
\end{align}
We use $\tilde\nu_t$ (rather than $\nu_t$) here to distinguish this velocity–score coupling from the SDE diffusion coefficient $\nu_t$ in the main-text sampling SDE~\eqref{eq:sde}. These are different quantities: for the FM schedule below, $\tilde\nu_t^2 < 0$ while $\nu_t^2 > 0$ is a free design parameter of the sampler.

\paragraph{Specialization to the FM schedule.}
We now specialize the above to the FM schedule used in the main text, $\alpha_t = 1-t$ and $\sigma_t = t$, for which $\dot{\alpha}_t = -1$, $\dot{\sigma}_t = 1$, and
\begin{align}
    \dot{\alpha}_t\sigma_t - \alpha_t\dot{\sigma}_t = -t - (1-t) = -1.
\end{align}
The denoiser~\eqref{eq:denoiser_from_velocity} simplifies to
\begin{align}
\label{eq:denoiser_fm}
    D_t(\zb_t) = \zb_t - t\, u_t(\zb_t),
\end{align}
and the velocity–score coupling~\eqref{eq:nu_tilde_def} becomes
\begin{align}
\label{eq:nu_tilde_fm}
    \frac{\tilde\nu_t^2}{2} = -\frac{t}{1-t},
\end{align}
so~\eqref{eq:velocity_score_reparam} reads
\begin{align}
\label{eq:velocity_score_reparam_fm}
    u_t(\zb_t) = -\frac{1}{1-t}\zb_t - \frac{t}{1-t}\nabla_{\zb_t}\log p_t(\zb_t).
\end{align}
Equivalently, the score can be recovered from $u_t$ and $\zb_t$ as
\begin{align}
\label{eq:score_from_velocity_fm}
    \nabla_{\zb_t}\log p_t(\zb_t) = -\frac{(1-t)\,u_t(\zb_t) + \zb_t}{t}.
\end{align}

\subsection{From score reparametrization to gradient guidance}
\label{appendix:subsec:gradient_guidance}

\paragraph{Step 1: Sampling SDE in terms of the score.}
The sampling SDE~\eqref{eq:sde} in the main text,
\begin{align}
\label{eq:sde_restated}
    d\zb_t = \left[u_t(\zb_t) - \frac{\nu_t^2}{2}\nabla_{\zb_t}\log p_t(\zb_t)\right]dt + \nu_t\, dW_t, \quad t : 1\to 0,
\end{align}
is written in terms of the velocity $u_t$. Substituting the velocity–score reparametrization~\eqref{eq:velocity_score_reparam} eliminates $u_t$ and gives a form in which the drift depends only on $\zb_t$ and $\nabla_{\zb_t}\log p_t(\zb_t)$:
\begin{align}
    d\zb_t &= \left[\frac{\dot\alpha_t}{\alpha_t}\zb_t + \frac{\tilde\nu_t^2}{2}\nabla_{\zb_t}\log p_t(\zb_t) - \frac{\nu_t^2}{2}\nabla_{\zb_t}\log p_t(\zb_t)\right]dt + \nu_t\, dW_t \\
\label{eq:sde_score_only}
    &= \left[\frac{\dot\alpha_t}{\alpha_t}\zb_t + \frac{\tilde\nu_t^2 - \nu_t^2}{2}\nabla_{\zb_t}\log p_t(\zb_t)\right]dt + \nu_t\, dW_t.
\end{align}
For the FM schedule with the main-text choice $\nu_t^2 = 4t/(1-t)$, combined with~\eqref{eq:nu_tilde_fm},
\begin{align}
    \frac{\tilde\nu_t^2 - \nu_t^2}{2} = -\frac{t}{1-t} - \frac{2t}{1-t} = -\frac{3t}{1-t},
\end{align}
so~\eqref{eq:sde_score_only} becomes
\begin{align}
\label{eq:sde_score_only_fm}
    d\zb_t = \left[-\frac{1}{1-t}\zb_t - \frac{3t}{1-t}\nabla_{\zb_t}\log p_t(\zb_t)\right]dt + \nu_t\, dW_t.
\end{align}

\paragraph{Step 2: Score of the reward-tilted marginal.}
To sample from the tilted target~\eqref{eq:reward_tilted_dist}, we need the score of the tilted marginal $p^\star_t$ at every intermediate $t$. Marginalizing the same forward kernel $p_t(\zb_t|\zb_0)$ against $p^\star_0$,
\begin{align}
    p^\star_t(\zb_t)
    &= \int p_t(\zb_t|\zb_0)\, p^\star_0(\zb_0)\, d\zb_0 \\
    &= \frac{1}{Z_\zb}\int p_t(\zb_t|\zb_0)\, p_0(\zb_0)\exp(r(\zb_0))\, d\zb_0 \\
    &= \frac{p_t(\zb_t)}{Z_\zb}\int p_{0|t}(\zb_0|\zb_t)\exp(r(\zb_0))\, d\zb_0 \\
\label{eq:tilted_marginal}
    &= \frac{p_t(\zb_t)}{Z_\zb}\exp(V_t(\zb_t)),
\end{align}
where the last line uses the definition of the value function~\eqref{eq:value_fn}. Taking the logarithm and then the gradient in $\zb_t$, the $\zb_t$-independent factor $Z_\zb$ drops out, leaving
\begin{align}
\label{eq:tilted_score}
    \boxed{\;\nabla_{\zb_t}\log p^\star_t(\zb_t) = \nabla_{\zb_t}\log p_t(\zb_t) + \nabla_{\zb_t} V_t(\zb_t)\;}
\end{align}
The score of the tilted marginal is the pretrained score shifted by the gradient of the value function.

\paragraph{Step 3: Gradient guidance as sampling from $p^\star$.}
Sampling from $p^\star$ uses the same SDE~\eqref{eq:sde_score_only} but with the tilted score $\nabla\log p^\star_t$ in place of $\nabla\log p_t$. Substituting~\eqref{eq:tilted_score},
\begin{align}
    d\zb_t 
    &= \left[\frac{\dot\alpha_t}{\alpha_t}\zb_t + \frac{\tilde\nu_t^2 - \nu_t^2}{2}\nabla_{\zb_t}\log p^\star_t(\zb_t)\right]dt + \nu_t\, dW_t \\
    &= \left[\frac{\dot\alpha_t}{\alpha_t}\zb_t + \frac{\tilde\nu_t^2 - \nu_t^2}{2}\bigl(\nabla_{\zb_t}\log p_t(\zb_t) + \nabla_{\zb_t}V_t(\zb_t)\bigr)\right]dt + \nu_t\, dW_t \\
\label{eq:tilted_sde_score}
    &= \underbrace{\left[\frac{\dot\alpha_t}{\alpha_t}\zb_t + \frac{\tilde\nu_t^2 - \nu_t^2}{2}\nabla_{\zb_t}\log p_t(\zb_t)\right]dt + \nu_t\, dW_t}_{\text{pretrained sampling SDE~\eqref{eq:sde_score_only}}} + \underbrace{\frac{\tilde\nu_t^2 - \nu_t^2}{2}\nabla_{\zb_t}V_t(\zb_t)\,dt}_{\text{gradient guidance correction}}.
\end{align}
Translating the score-only form back to the velocity form by inverting the substitution in Step~1 (i.e.\ using $\frac{\dot\alpha_t}{\alpha_t}\zb_t = u_t(\zb_t) - \frac{\tilde\nu_t^2}{2}\nabla_{\zb_t}\log p_t$), we obtain the equivalent expression
\begin{align}
\label{eq:tilted_sde_velocity}
    d\zb_t = \left[u_t(\zb_t) - \frac{\nu_t^2}{2}\nabla_{\zb_t}\log p_t(\zb_t) + \frac{\tilde\nu_t^2 - \nu_t^2}{2}\nabla_{\zb_t}V_t(\zb_t)\right]dt + \nu_t\, dW_t.
\end{align}
That is, sampling from the reward-tilted distribution $p^\star$ reduces to running the pretrained sampling SDE~\eqref{eq:sde_restated} with a single additional drift term proportional to $\nabla_{\zb_t}V_t(\zb_t)$. This is exactly \emph{gradient guidance}: following the gradient of the log value function at each step steers the trajectory toward high-reward regions, and by~\eqref{eq:tilted_score} it does so in precisely the way required to sample from $p^\star$. Specializing to FM with $\nu_t^2 = 4t/(1-t)$, the guidance coefficient becomes $\frac{\tilde\nu_t^2 - \nu_t^2}{2} = -\frac{3t}{1-t}$, so~\eqref{eq:tilted_sde_velocity} reads
\begin{align}
\label{eq:tilted_sde_velocity_fm}
    d\zb_t = \left[u_t(\zb_t) - \frac{2t}{1-t}\nabla_{\zb_t}\log p_t(\zb_t) - \frac{3t}{1-t}\nabla_{\zb_t}V_t(\zb_t)\right]dt + \nu_t\, dW_t.
\end{align}

\subsection{Reinforcement Post-training of Diffusion}
\label{appendix:subsec:rl}

\paragraph{Setup.}
The discrete denoising transition of diffusion sampling can be formulated as a Markov Decision Process (MDP). Let the state at step $k$ be $\zb_{t_k}$, the action is the denoised estimate $\ab_{t_k} := \zb_{t_{k-1}}$ predicted by the neural network, and the policy is defined as $\pi(\ab_{t_k} | \sbb_{t_k}) := p_\theta(\zb_{t_{k-1}}|\zb_{t_k})$. The transition is deterministic, and the initial state distribution is the reference distribution. The reward is defined at the final step, i.e. $t = 0$.

\paragraph{KL-regularized RL induces reward-tilted distribution.}
We show that the KL-regularized RL objective~\eqref{eq:rl_objective} has the reward-tilted distribution~\eqref{eq:reward_tilted_dist} as its unique optimum. Expanding the KL term,
\begin{align}
    \Ed_{\zb_0 \sim p_\theta}[r(\zb_0)] - D_{\rm KL}(p_\theta\,\|\,p)
    &= \int p_\theta(\zb_0)\left[r(\zb_0) + \log p(\zb_0) - \log p_\theta(\zb_0)\right]d\zb_0 \\
    &= \int p_\theta(\zb_0)\log\frac{p(\zb_0)\exp(r(\zb_0))}{p_\theta(\zb_0)}\,d\zb_0 \\
    &= \int p_\theta(\zb_0)\log\frac{Z_\zb\, p^\star(\zb_0)}{p_\theta(\zb_0)}\,d\zb_0 \\
\label{eq:rl_as_kl}
    &= \log Z_\zb - D_{\rm KL}(p_\theta\,\|\,p^\star),
\end{align}
where the third line uses the definition $p^\star(\zb_0) = \frac{1}{Z_\zb}p(\zb_0)\exp(r(\zb_0))$ from~\eqref{eq:reward_tilted_dist}. Since $\log Z_\zb$ is independent of $\theta$ and $D_{\rm KL}(p_\theta\,\|\,p^\star) \geq 0$ with equality iff $p_\theta = p^\star$, the objective is maximized precisely when $p_\theta = p^\star$. Hence optimizing~\eqref{eq:rl_objective} is equivalent to minimizing the reverse KL to the reward-tilted target $p^\star$, and the two formulations share the same global optimum.

\subsection{Off-policy Value Model Training}
\label{appendix:off_policy_value_model_training}

\begin{proposition}
\label{prop:conditional_mean}
Consider the population objective
\begin{align}
    \mathcal{L}_{\mathrm{value}}(\omega)
    =
    \Ed_{t}\,\Ed_{\zb_0 \sim p_0}\,\Ed_{\bepsilon \sim \Nc(0, I_d)}
    \!\left[
        \left(V_\omega^{(i^\star,j^\star)}(\zb_t) - r_\phi(\zb_0)\right)^2
    \right],
\end{align}
with $\zb_t = \alpha_t \zb_0 + \sigma_t \bepsilon$. Assume the parametric family $\{V_\omega^{(i^\star,j^\star)} : \omega\}$ is expressive enough to represent any function of $(t, \zb_t)$. Then the minimizer satisfies
\begin{align}
\label{eq:cond_mean_minimizer}
    V_{\omega^\star}^{(i^\star,j^\star)}(\zb_t) \;=\; \Ed\!\left[r_\phi(\zb_0)\mid\zb_t\right],
\end{align}
where the conditional expectation is taken under the posterior $p_{0|t}(\zb_0|\zb_t) \propto p_t(\zb_t|\zb_0)\, p_0(\zb_0)$.
\end{proposition}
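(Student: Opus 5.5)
The plan is the standard $L^2$ projection argument, carried out pointwise in $(t,\zb_t)$. By the expressivity assumption, minimizing $\mathcal{L}_{\mathrm{value}}$ over $\omega$ is the same as minimizing over all measurable functions $g(t,\zb_t)$ with $\Ed[g^2]<\infty$. I will also assume $\Ed[r_\phi(\zb_0)^2]<\infty$, so that all quantities are finite.

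First, I change the order of integration. The joint law of $(\zb_0,\zb_t)$ induced by $\zb_0\sim p_0$, $\bepsilon\sim\Nc(0,I_d)$ and $\zb_t=\alpha_t\zb_0+\sigma_t\bepsilon$ is $p_0(\zb_0)\,p_t(\zb_t|\zb_0)$, as in Eq.~\eqref{eq:forward_path}. By Bayes this equals $p_t(\zb_t)\,p_{0|t}(\zb_0|\zb_t)$, which is exactly the posterior used in Eq.~\eqref{eq:tilted_marginal}. Hence the objective becomes
\begin{align}
\mathcal{L}(g)=\Ed_t\,\Ed_{\zb_t\sim p_t}\,\Ed_{\zb_0\sim p_{0|t}(\cdot|\zb_t)}\big[(g(t,\zb_t)-r_\phi(\zb_0))^2\big].
\end{align}

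Next, I write $m_t(\zb_t):=\Ed[r_\phi(\zb_0)\mid\zb_t]$ and add and subtract it inside the square. The cross term
\begin{align}
2\,(g-m_t)(\zb_t)\,\Ed\big[m_t(\zb_t)-r_\phi(\zb_0)\mid\zb_t\big]
\end{align}
vanishes, because $g-m_t$ is $\zb_t$-measurable and the inner conditional expectation is zero by the tower property. What remains is
\begin{align}
\mathcal{L}(g)=\Ed\big[(g(t,\zb_t)-m_t(\zb_t))^2\big]+\Ed\big[\Var(r_\phi(\zb_0)\mid\zb_t)\big].
\end{align}
The second term does not depend on $g$. The first term is nonnegative and equals zero exactly when $g=m_t$ almost everywhere with respect to $p_t$ and the distribution of $t$.

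Finally, $m_t$ is measurable and square integrable, so the expressivity assumption guarantees some $\omega^\star$ with $V_{\omega^\star}^{(i^\star,j^\star)}=m_t$. Any minimizer must therefore agree with $m_t$ almost everywhere, which is Eq.~\eqref{eq:cond_mean_minimizer}.

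The main obstacle is technical rather than conceptual. The identity holds only almost everywhere, and it needs integrability of $r_\phi$. For $t<1$ the Gaussian kernel gives $p_t$ full support, so "almost everywhere" in fact means Lebesgue-almost everywhere on $\Rd^d$. I will state this caveat explicitly and not claim pointwise equality.
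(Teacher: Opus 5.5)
Your proposal is correct and follows essentially the same route as the paper. The paper also conditions on $(t,\zb_t)$ and notes that the inner one-dimensional quadratic is minimized by the posterior mean, which is the pointwise form of your decomposition $\Ed[(g-m_t)^2]+\Ed[\Var(r_\phi(\zb_0)\mid\zb_t)]$; your version additionally makes explicit the square-integrability and the almost-everywhere uniqueness, which the paper leaves implicit.

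One small slip in your closing caveat: with $\sigma_t=t$, the Gaussian smoothing that gives $p_t$ full support holds for every $t>0$, including $t=1$ where $p_1=\Nc(0,I_d)$. The degenerate endpoint is the clean end $t=0$, so the condition should read $t>0$, not $t<1$.
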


\begin{proof}
Rewrite the objective by first conditioning on $(t, \zb_t)$:
\begin{align}
    \mathcal{L}_{\mathrm{value}}(\omega)
    =
    \Ed_{t}\,\Ed_{\zb_t \sim p_t}\,\Ed_{\zb_0 \sim p_{0|t}(\cdot|\zb_t)}
    \!\left[
        \left(V_\omega^{(i^\star,j^\star)}(\zb_t) - r_\phi(\zb_0)\right)^2
    \right].
\end{align}
For each fixed $(t, \zb_t)$, the model output $V_\omega^{(i^\star,j^\star)}(\zb_t)$ is a single scalar, while $r_\phi(\zb_0)$ varies according to the posterior $p_{0|t}(\cdot|\zb_t)$. The inner expectation is therefore a one-dimensional quadratic in this scalar, which is uniquely minimized by the posterior mean:
\begin{align}
    \arg\min_{c\in\Rd}\,\Ed_{\zb_0 \sim p_{0|t}(\cdot|\zb_t)}\!\left[(c - r_\phi(\zb_0))^2\right]
    \;=\; \Ed[r_\phi(\zb_0)\mid\zb_t].
\end{align}
Since this minimum is attained at every $(t, \zb_t)$ by the same function $\zb_t \mapsto \Ed[r_\phi(\zb_0)\mid\zb_t]$, and the outer expectation is a non-negative weighted average of the inner minima, it is minimized by the same function. The expressiveness assumption ensures this function lies in the parametric family.
\end{proof}

We refer to the {\em standard} value function as
\begin{align}
    \tilde V_t(\zb_t) \;:=\; \Ed[r_\phi(\zb_0)\mid\zb_t],
\end{align}
to distinguish it from the \emph{soft} value function
\begin{align}
    V_t(\zb_t) \;:=\; \log\Ed[\exp(r_\phi(\zb_0))\mid\zb_t]
\end{align}
defined in \eqref{eq:value_fn}.

\paragraph{Relation to the soft value function.}
Applying Jensen's inequality to $\exp$,
\begin{align}
\label{eq:jensen_value}
    \tilde V_t(\zb_t)
    \;=\;
    \Ed[r_\phi(\zb_0)\mid\zb_t]
    \;\leq\;
    \log\Ed[\exp(r_\phi(\zb_0))\mid\zb_t]
    \;=\;
    V_t(\zb_t),
\end{align}
with equality if and only if $r_\phi(\zb_0)$ is constant on the support of the posterior $p_{0|t}(\cdot|\zb_t)$. The two thus coincide in the noiseless limit $t\to 0$, where the posterior concentrates on the corresponding clean latent.

To make the connection precise at finite $t$, consider the tempered reward $r \mapsto \lambda r$ with inverse temperature $\lambda > 0$, and write
\begin{align}
    V^{(\lambda)}(\zb_t) := \log\Ed\!\left[\exp(\lambda r_\phi(\zb_0))\mid\zb_t\right],
    \qquad
    \tilde V^{(\lambda)}(\zb_t) := \Ed[\lambda r_\phi(\zb_0)\mid\zb_t] = \lambda \tilde V_t(\zb_t).
\end{align}
Let $\mu_t := \Ed[r_\phi(\zb_0)\mid\zb_t] = \tilde V_t(\zb_t)$ and $\sigma_t^2 := \Var[r_\phi(\zb_0)\mid\zb_t]$. Expanding the exponential and the logarithm in powers of $\lambda$ around $\lambda = 0$,
\begin{align}
    \Ed[\exp(\lambda r_\phi(\zb_0))\mid\zb_t]
    &=
    1 + \lambda \mu_t + \frac{\lambda^2}{2}(\mu_t^2 + \sigma_t^2) + O(\lambda^3),
\end{align}
and applying $\log(1 + x) = x - x^2/2 + O(x^3)$ yields
\begin{align}
\label{eq:soft_value_taylor}
    V^{(\lambda)}(\zb_t)
    \;=\;
    \lambda \tilde V_t(\zb_t)
    \;+\;
    \frac{\lambda^2}{2}\Var[r_\phi(\zb_0)\mid\zb_t]
    \;+\;
    O(\lambda^3),
\end{align}
and consequently
\begin{align}
\label{eq:soft_value_grad_taylor}
    \nabla_{\zb_t} V^{(\lambda)}(\zb_t)
    \;=\;
    \lambda\,\nabla_{\zb_t}\tilde V_t(\zb_t)
    \;+\;
    \frac{\lambda^2}{2}\,\nabla_{\zb_t}\Var[r_\phi(\zb_0)\mid\zb_t]
    \;+\;
    O(\lambda^3).
\end{align}
Two consequences make $\tilde V$ a faithful surrogate for $V$ in the alignment methods of Section~\ref{subsec:alignment_as_reward_tilting}.

\noindent \emph{Gradient guidance.} Substituting \eqref{eq:soft_value_grad_taylor} into the gradient-guidance correction \eqref{eq:tilted_sde_velocity}, the leading-order tilt is
\begin{align}
    \frac{\tilde\nu_t^2 - \nu_t^2}{2}\,\nabla_{\zb_t} V^{(\lambda)}(\zb_t)
    \;=\;
    \lambda\cdot\frac{\tilde\nu_t^2 - \nu_t^2}{2}\,\nabla_{\zb_t}\tilde V_t(\zb_t)
    \;+\;
    O(\lambda^2),
\end{align}
which is exactly the guidance one obtains by replacing $V$ with $\tilde V$ and absorbing the factor $\lambda$ into the guidance scale $c_t$ in \eqref{eq:gradient_guidance}. To leading order in the reward scale, gradient guidance with the regressed conditional-mean value $\tilde V$ thus samples from the reward-tilted distribution at temperature $\lambda$, with the temperature implicit in the chosen guidance coefficient.

\noindent \emph{Particle methods.} FK steering and other SMC-style methods use $V$ only through pairwise comparisons that determine particle weights (Eq.~\eqref{eq:FK_steering}). By \eqref{eq:soft_value_taylor}, $V^{(\lambda)}$ and $\lambda \tilde V$ differ by $\frac{\lambda^2}{2}\sigma_t^2 + O(\lambda^3)$, so they induce the same particle ordering up to $O(\lambda^2)$ corrections controlled by the conditional reward variance $\sigma_t^2$. The correction is largest in the high-noise regime (where $\sigma_t^2$ is largest), but $\tilde V$ retains the dominant ranking signal in our experiments.


\section{Extended Related Work}
\label{app_sec:extended_relatedwork}

Here, we expand the related work in Section~\ref{sec:related_work} with a detailed discussion of (i) inference-time approximations of the value function, (ii) training-time alignment with Monte Carlo approximation, and (iii) the role of value models and noisy latent reward models in existing alignment pipelines.

\paragraph{Inference-time approximations.}
Inference-time alignment can be viewed as an approximation to a soft optimal denoising policy, in which the value function provides a look-ahead reward prediction~\cite{uehara2025inference}.
Since rewards are typically defined in clean pixel space, the value function cannot be evaluated directly on noisy latents.
The \emph{Tweedie approximation}~\cite{chung2023diffusion, song2023loss, efron2011tweedie} forms a one-step clean estimate via the Tweedie formula, decodes it through the VAE, and evaluates the pixel-space reward on the result.
Most guidance methods~\cite{chung2023diffusion, song2023loss, ye2024tfg, yu2023freedom, he2024manifold, kim2025flowdps, song2023pseudoinverse, bansal2023universal, han2024trainingfree} use the gradient of this approximation to modify the denoising velocity or score, while sequential Monte Carlo (SMC) methods~\cite{singhal2025a, kim2026inferencetime, wu2023practical, kim2025testtime} use it to compute particle importance weights and resampling probabilities.
This strategy incurs two costs: estimator bias in high-noise regions and an additional denoiser pass per evaluation.
The \emph{Monte Carlo approximation}~\cite{uehara2025inference, li2024derivative} instead estimates the value function by rolling out multiple denoising trajectories from a noisy latent and aggregating the resulting rewards, as in SVDD~\cite{li2024derivative} and search-based methods such as DSearch~\cite{li2025dynamic}.
This avoids approximation bias but incurs prohibitive trajectory-rollout cost, which compounds with single-trajectory variance in budget-limited regimes.

\paragraph{Training-time alignment with Monte Carlo approximation.}
RL-based post-training methods~\cite{prabhudesai2023aligning, clark2024directly, lee2023aligning, dong2023raft, wallace2024diffusion, yang2024using, liu2026improving, black2024training, fan2023reinforcement, zheng2026diffusionnft} similarly require value function estimation along the denoising trajectory, which is commonly approximated by Monte Carlo rollouts.
Direct reward finetuning~\cite{clark2024directly, prabhudesai2023aligning, wu2024deep} backpropagates terminal rewards through full denoising trajectories; PPO-family methods~\cite{black2024training, fan2023reinforcement, miao2024training, liu2025flow, xue2025dancegrpo, ding2026treegrpo, li2026branchgrpo, li2025mixgrpo, wang2025grpo} optimize policy gradients over sampled trajectories; and DiffusionNFT~\cite{zheng2026diffusionnft} uses terminal rewards from complete generations within a forward-process contrastive objective.
These methods inherit the cost of rollout-based training, suffer from high-variance terminal-reward estimates when only one or a few trajectories are used~\cite{vysotskyi2026critic}, and provide only weak credit assignment to intermediate denoising steps~\cite{zhang2024confronting}.

\paragraph{Value models and noisy latent reward models in alignment pipelines.}
Several works leverage value models, or more broadly noisy latent reward models, to address the limitations above.
LatSearch~\cite{zhao2026latsearch} evaluates intermediate noisy latents instead of fully rolling out every candidate trajectory, thereby reducing the cost of search-based inference.
In PPO-style post-training, such models have been used to provide per-step feedback that improves credit assignment over intermediate denoising steps~\cite{zhang2024confronting} and stabilizes high-variance policy-gradient updates~\cite{vysotskyi2026critic}.
DPO-style methods~\cite{liang2025aesthetic, xian2026consistent, zhang2026diffusion} likewise utilize noisy latent rewards or preferences to refine credit assignment over intermediate denoising states, while direct reward finetuning with a value model~\cite{mi2025video} provides feedback before full denoising completion, reducing rollout costs.
\section{Additional Methodological Details}
\label{appendix:additional_method}

\subsection{StitchVM Training}
\label{appendix:svm_details}
This subsection provides details that were left implicit in Section~\ref{sec:method_ StitchVM}.
We describe the precise form of the stitching layer $s_\psi$, which extends the closed-form linear component in Eq.~\eqref{eq:stitch_fit}, and the practical restrictions we impose on the stitching-layer search grid.

\paragraph{Stitching layer architecture.}
The closed-form solution $W_{i^\star,j^\star}^\star$ gives the best \emph{linear} map between the two pretrained representation spaces.
To additionally model the nonlinear discrepancy while reusing this least-squares-optimal initialization, we parameterize $s_\psi$ as a residual block built on top of $W_{i^\star,j^\star}^\star$:
\begin{equation}
    s_\psi(\hb)
    \;=\;
    \mathrm{Up}\!\big(F_\psi(\hb)\big)
    \;+\;
    G_\psi\!\Big(\mathrm{Up}\!\big(F_\psi(\hb)\big)\Big),
\end{equation}
where $\hb$ denotes the diffusion-backbone feature at layer $i^\star$, $F_\psi$ is a $1{\times}1$ convolution initialized with $W_{i^\star,j^\star}^\star$, $\mathrm{Up}(\cdot)$ is a deterministic bilinear resampling operator that matches the spatial resolution of the target reward model representation, and
\begin{align}
G_\psi = \mathrm{Conv}_{1{\times}1} \circ \mathrm{SiLU} \circ \mathrm{Conv}_{1{\times}1} \circ \mathrm{SiLU}
\end{align}
is a two-layer pointwise MLP with bottleneck ratio $1{:}r$ ($r=8$ in all stitching configurations).
The weights and bias of the final $1{\times}1$ convolution in $G_\psi$ are zero-initialized, so that $G_\psi(\cdot)\equiv \mathbf{0}$ at the beginning of training.

\noindent\emph{Channel and resolution mismatch.}
The DiT token grid and the reward model patch grid generally have different spatial resolutions, since the two backbones were pretrained with different patch sizes and input resolutions.
We bridge this resolution mismatch with a single bilinear resampling step applied after $F_\psi$, denoted by $\mathrm{Up}(\cdot)$ in the equation above.
The channel mismatch between the diffusion backbone and the reward model is handled entirely by $F_\psi$.

\noindent\emph{Tokenization.}
Since the reward model is truncated at layer $j^\star$, the stitched feature must be converted into the token format expected by the remaining reward model suffix $r_\phi^{\geq j}$.
We perform this token-format adaptation following the reward model's original tokenization scheme.
Specifically, the spatial output of $s_\psi$ is flattened into patch tokens, augmented with the special tokens required by the reward model suffix, and combined with the corresponding positional embeddings before being passed to $r_\phi^{\geq j}$.
For CLIP-based reward models, this amounts to prepending the \texttt{[CLS]} token and adding the positional embeddings.

\noindent\emph{Diffusion-backbone conditioning.}
When the diffusion backbone requires conditioning inputs such as the timestep or text embeddings, we provide the noise-level conditioning corresponding to the forward process and use a fixed null-text conditioning during training.

\paragraph{Stitching layer selection.}
We rank candidate layer pairs $(i,j)$ according to the closed-form feature-matching loss.
We describe below the probe set used to estimate this loss and the practical restrictions imposed on the search grid.

\noindent\emph{Probe set.}
The expectation in Eq.~\eqref{eq:stitch_fit} is approximated by caching paired features on a held-out subset of $N_{\mathrm{probe}}=200$ clean images.
For each clean latent, we draw $t\!\sim\!\mathrm{Unif}[0,1]$ and follow the same forward path as in Eq.~\eqref{eq:forward_path}.

\noindent\emph{Closed-form solution for the stitching layer search.}
We approximate the population objective in Eq.~\eqref{eq:stitch_fit} by its empirical counterpart on a fixed probe set $\{(\xb_0^{(n)}, t^{(n)}, \bepsilon^{(n)})\}_{n=1}^{N_{\rm probe}}$, where each $\zb_t^{(n)} = \alpha_{t^{(n)}} \zb_0^{(n)} + \sigma_{t^{(n)}} \bepsilon^{(n)}$.
Stacking the paired features as columns into matrices
\begin{align}
    F^{(i)}
    &=
    \left[ u_\theta^{\leq i}(\zb_t^{(1)}), \,\ldots,\,
    u_\theta^{\leq i}(\zb_t^{(N_{\rm probe})}) \right]
    \in \mathbb{R}^{d_i^u \times N_{\rm probe}},
    \\
    G^{(j)}
    &=
    \left[ r_\phi^{\leq j-1}(\zb_0^{(1)}), \,\ldots,\,
    r_\phi^{\leq j-1}(\zb_0^{(N_{\rm probe})}) \right]
    \in \mathbb{R}^{d_j^r \times N_{\rm probe}},
\end{align}
the empirical objective reduces to a standard linear least-squares problem
\begin{align}
    \widehat{W}_{i,j}^\star
    =
    \arg\min_{W \in \mathbb{R}^{d_j^r  \times d_i^u}}
    \left\| W F^{(i)} - G^{(j)} \right\|_F^2,
    \label{eq:closed_form_lsq}
\end{align}
whose minimizer admits the closed-form expression
\begin{align}
    \widehat{W}_{i,j}^\star
    =
    G^{(j)} \left( F^{(i)} \right)^{+},
    \label{eq:closed_form_solution}
\end{align}
where $(\cdot)^{+}$ denotes the Moore-Penrose pseudoinverse.

\noindent\emph{Restricting the reward model depth.}
We restrict the search to the early blocks of the reward model and do not sweep over its middle or late blocks.
Empirically, we observed two consistent failure modes as $j$ moves past the early reward model blocks, independent of the diffusion layer $i$:
\begin{enumerate}
    \item the closed-form fitting loss increases sharply, indicating that $W_{i,j}^\star$ can no longer linearly reconstruct the deeper reward model features; and
    \item even after Stage-2 finetuning with $\mathcal{L}_{\mathrm{value}}$, the resulting predictions are substantially worse than those from early-block stitches.
\end{enumerate}
Based on these observations, we select $(i^\star,j^\star)$ within this restricted search space.

\subsection{FK steering with StitchVM}
\label{appendix:fksteering}
Algorithm~\ref{alg:fks_stitchvm} summarizes the FK steering variant with StitchVM.
The algorithm keeps the standard FK potential and resampling step unchanged: at steps $k\in\mathcal{S}_{\mathrm{FK}}$, particles are weighted by the FK potential $G_k$ and resampled accordingly.
The only modification is applied at the proposal-scaling steps $k\in\mathcal{S}_{M}$.
Instead of drawing a single proposal per particle, each particle draws $M$ local proposals from the transition kernel, scores them with StitchVM, and each particle keeps one proposal with the highest StitchVM score.
This nested proposal selection increases the number of candidates explored from $N$ to $NM$ at these steps, while avoiding the additional denoiser and decoder evaluations that Tweedie-based approximation would require.
In our experiments, $\mathcal{S}_{M}$ is chosen as the early denoising steps.

\newcommand{\Vom}{V_{\omega}^{(i^\star,j^\star)}}
\newcommand{\svmblue}[1]{\textcolor{GoogleBlue}{#1}}
\newcommand{\svmcomment}[1]{\textcolor{GoogleBlue}{\(\triangleright\) #1}}
\begin{algorithm}[t]
\caption{FK Steering with StitchVM Proposal Scaling \(M\)}
\label{alg:fks_stitchvm}
\begin{algorithmic}[1]
\Require Transition kernels \(p_{t_{k-1}\mid t_k}\); FK potential function \(G_k\); StitchVM \(\Vom\); number of particles \(N\); proposals per particle \(M\); FK-resampling steps \(\mathcal{S}_{\mathrm{FK}}\subseteq\{1,\dots,K\}\); proposal-scaling steps \(\mathcal{S}_{M}\subseteq\{1,\dots,K\}\); timestep schedule \(t_K>\cdots>t_0\).
\State Sample initial particles \(\zb_{t_K}^n\sim\mathcal{N}(0,I)\) for \(n=1,\dots,N\).
\For{\(k=K,K-1,\dots,1\)}
    \For{\(n=1,\dots,N\)}
        \If{\(k\in\mathcal{S}_{M}\) \textbf{and} \(M>1\)}
            \State \svmblue{\textbf{\# StitchVM proposal scaling:}}
            \State \svmblue{\(\zb_{t_{k-1}}^{n,m}\sim p_{t_{k-1}\mid t_k}(\cdot\mid \zb_{t_k}^n)\) for \(m=1,\dots,M\). \Comment{Sample \(M\) proposals}}
            \State \svmblue{\(m_n^\star=\arg\max_{m\in[M]}\Vom(\zb_{t_{k-1}}^{n,m})\). \Comment{argmax over \(M\) proposals}}
            \State \svmblue{\(\bar{\zb}_{t_{k-1}}^n=\zb_{t_{k-1}}^{n,m_n^\star}\).}
        \Else
            \State Sample one proposal \(\bar{\zb}_{t_{k-1}}^n\sim p_{t_{k-1}\mid t_k}(\cdot\mid \zb_{t_k}^n)\).
        \EndIf
    \EndFor
    \If{\(k\in\mathcal{S}_{\mathrm{FK}}\)}
        \For{\(n=1,\dots,N\)}
            \State Compute \(G^n = G_k(\zb_{t_k}^n,\bar{\zb}_{t_{k-1}}^n)\). \Comment{standard FK potential}
        \EndFor
        \State Sample \(a_{t_{k-1}}^n\sim \mathrm{Multinomial}(G^1,\dots,G^N)\) for \(n=1,\dots,N\).
        \State Set \(\zb_{t_{k-1}}^n \gets \bar{\zb}_{t_{k-1}}^{a_{t_{k-1}}^n}\) for \(n=1,\dots,N\). \Comment{standard FK resampling}
    \Else
        \State Set \(\zb_{t_{k-1}}^n \gets \bar{\zb}_{t_{k-1}}^n\) for \(n=1,\dots,N\).
    \EndIf
\EndFor
\State \Return final particles \(\{\zb_{t_0}^n\}_{n=1}^N\).
\end{algorithmic}
\end{algorithm}

\subsection{AlignProp \& DRaFT with StitchVM}
\label{appendix:value_based_drft}
Direct reward finetuning methods such as AlignProp~\cite{prabhudesai2023aligning} and DRaFT~\cite{dong2023raft} optimize a differentiable reward by generating a clean sample and backpropagating the reward gradient through the denoising trajectory.
We describe this procedure using the latent-space notation of Section~\ref{subsec:diffusion_and_flow}.
Let $\theta$ denote the trainable parameters of the diffusion generator, initialized from the pretrained parameters $\theta_0$.
Starting from $\zb_1\sim p_1=\Nc(0,I_d)$, a full reverse rollout produces a clean sample $\zb_0 \sim p_\theta(\zb_0\mid \zb_1)$.
A standard direct reward finetuning objective can be written as
\begin{align}
    \mathcal{L}_{\mathrm{DRFT}}(\theta)
    =
    -
    \mathbb{E}_{\zb_1\sim p_1,\,
    \zb_0\sim p_\theta(\cdot\mid \zb_1)}
    \left[
        r_\phi(\zb_0)
    \right]
    +
    \lambda\,
    \mathcal{R}(\theta;\theta_0),
    \label{eq:appendix_drft_objective}
\end{align}
where we use the shorthand $r_\phi(\zb_0)=r_\phi(\Dc(\zb_0))$, and $\mathcal{R}(\theta;\theta_0)$ is a regularizer that limits deviation from the pretrained generator, such as an output regularizer~\cite{taghibakhshi2024enhance}.

Because the reward in Eq.~\eqref{eq:appendix_drft_objective} is evaluated only at the terminal clean sample, optimizing it requires differentiating through the full reverse trajectory from $t=1$ to $t=0$.
This is memory-intensive and computationally expensive.
Moreover, backpropagation through a long denoising chain can lead to unstable or exploding gradients in practice~\cite{zheng2026diffusionnft}.
For this reason, direct reward finetuning is often restricted to the final, low-noise portion of the trajectory, which provides weak or indirect learning signals for early, high-noise timesteps.

Our stitched value model $V_\omega^{(i^\star, j^\star)}$ provides a learned estimate of the terminal reward from an intermediate noisy latent.
Instead of sampling all the way to $\zb_0$ and evaluating $r_\phi(\zb_0)$, we stop the reverse process at an intermediate timestep $\tau\in(0,1)$.
That is, we sample $\zb_\tau \sim p_\theta(\zb_\tau\mid\zb_1)$ and replace the remaining rollout from $\zb_\tau$ to $\zb_0$ with a single evaluation of $V_\omega^{(i^\star, j^\star)}$.
This yields the value function-based objective
\begin{align}
    \mathcal{L}_{\mathrm{V\text{-}DRFT}}(\theta)
    =
    -
    \mathbb{E}_{\tau,\,
    \zb_1\sim p_1,\,
    \zb_\tau\sim p_\theta(\zb_\tau \mid \zb_1)}
    \left[
        V_\omega^{(i^\star, j^\star)}(\zb_\tau)
    \right]
    +
    \lambda\,
    \mathcal{R}(\theta;\theta_0),
    \label{eq:appendix_vdrft_objective}
\end{align}
where $\tau$ is sampled from a prespecified stopping-time distribution over the denoising schedule.
This has two practical advantages.
First, it provides direct reward supervision at intermediate noisy states, including high-noise regions where terminal-reward backpropagation is weak or unstable.
Second, it avoids rollouts of the entire denoising trajectory, reducing computation.

\subsection{DiffusionNFT with StitchVM}
\label{appendix:value_based_nft}

DiffusionNFT~\cite{zheng2026diffusionnft} optimizes a reward-weighted forward-process regression objective.
We describe it using the latent-space notation of Section~\ref{subsec:diffusion_and_flow}.
Let $\theta$ denote the current model and let $\theta_{\mathrm{old}}$ denote the frozen data-collection policy.
Starting from $\zb_1\sim p_1$, the data-collection policy generates clean latents $\zb_0$ via the reverse process.
As in Section~\ref{subsec:alignment_as_reward_tilting}, we use the shorthand $r_\phi(\zb_0)=r_\phi(\Dc(\zb_0))$. 

Given samples from $\theta_{\mathrm{old}}$, DiffusionNFT first transforms the raw reward into an optimality probability.
For a normalization group of samples drawn from the data-collection policy, this is written as
\begin{align}
    r_{\mathrm{norm}}(\zb_0)
    =
    \frac{
        r_\phi(\zb_0)
        -
        \mathbb{E}_{\zb'_0 \sim p_{\theta_{\mathrm{old}}}}
        \left[
            r_\phi(\zb'_0)
        \right]
    }{
        Z
    },
    \qquad
    r(\zb_0)
    =
    \frac{1}{2}
    +
    \frac{1}{2}
    \mathrm{clip}
    \left(
        r_{\mathrm{norm}}(\zb_0),
        -1,
        1
    \right),
    \label{eq:appendix_nft_reward_norm}
\end{align}
where the expectation is estimated over the same normalization group and $Z>0$ is a normalization scale\footnote{Following DiffusionNFT's reward-standardization convention~\cite{zheng2026diffusionnft}; the same $Z$ is reused in our StitchVM-based variant (Eq.~\eqref{eq:appendix_vnft_reward_norm}).}.

Given a clean latent $\zb_0$, DiffusionNFT samples a forward noisy state using Eq.~\eqref{eq:forward_path},
\begin{align}
    \zb_t
    =
    \alpha_t\zb_0
    +
    \sigma_t\bepsilon,
    \qquad
    \bepsilon\sim\Nc(0,I_d).
\end{align}
The corresponding conditional velocity, expressed in terms of the generating variables $(\zb_0, \bepsilon)$, is
\begin{align}
\label{eq:appendix_nft_velocity}
    u_t(\zb_t\mid\zb_0)
    =
    \dot{\alpha}_t\zb_0
    +
    \dot{\sigma}_t\bepsilon,
\end{align}
where $\dot\alpha_t$ and $\dot\sigma_t$ denote time derivatives.
This form is equivalent to the $(\zb_t, \zb_0)$-parameterization in Section~\ref{subsec:diffusion_and_flow} and Appendix~\ref{appendix:subsec:reparam}\footnote{Substituting $\bepsilon = (\zb_t - \alpha_t\zb_0)/\sigma_t$ into Eq.~\eqref{eq:appendix_nft_velocity} recovers $u_t(\zb_t\mid\zb_0) = \frac{\dot\sigma_t}{\sigma_t}\zb_t + (\dot\alpha_t - \alpha_t\frac{\dot\sigma_t}{\sigma_t})\zb_0$.}, but is more convenient when $\zb_0$ and $\bepsilon$ are the natural sampling variables, as is the case here.

DiffusionNFT then constructs implicit positive and negative velocity fields by interpolating between the frozen data-collection velocity $u_{\theta_{\mathrm{old}}}$ and the trainable velocity $u_\theta$:
\begin{align}
    u_{\theta}^{+}(\zb_t,t)
    &=
    (1-\beta)\,
    u_{\theta_{\mathrm{old}}}(\zb_t,t)
    +
    \beta\,
    u_{\theta}(\zb_t,t),
    \\
    u_{\theta}^{-}(\zb_t,t)
    &=
    (1+\beta)\,
    u_{\theta_{\mathrm{old}}}(\zb_t,t)
    -
    \beta\,
    u_{\theta}(\zb_t,t).
\end{align}
The DiffusionNFT objective is then constructed as
\begin{align}
    \mathcal{L}_{\mathrm{NFT}}(\theta)
    =
    \mathbb{E}_{\zb_0\sim p_{\theta_{\mathrm{old}}},\,t,\,\bepsilon}
    \Big[
    &\,r(\zb_0)
    \left\|
        u_{\theta}^{+}(\zb_t,t)
        -
        u_t(\zb_t\mid\zb_0)
    \right\|_2^2
    \nonumber\\
    &+
    \bigl(1-r(\zb_0)\bigr)
    \left\|
        u_{\theta}^{-}(\zb_t,t)
        -
        u_t(\zb_t\mid\zb_0)
    \right\|_2^2
    \Big].
    \label{eq:appendix_nft_objective}
\end{align}
After the update, the data-collection policy is updated by an exponential moving average,
\begin{align}
    \theta_{\mathrm{old}}
    \leftarrow
    \rho\,\theta_{\mathrm{old}}
    +
    (1-\rho)\,\theta.
\end{align}

Our StitchVM-based variant retains the same weighted regression structure but replaces the terminal clean reward with a reward model estimate at a stopped noisy latent.
Instead of running the reverse process all the way to $\zb_0$, we stop at an intermediate timestep $\tau\in(0,1)$ and obtain $\zb_\tau$ from the data-collection policy.
The raw scalar signal is then
\begin{align}
    \widetilde r_{\mathrm{raw}}(\zb_\tau)
    =
    V_\omega^{(i^\star, j^\star)}(\zb_\tau).
\end{align}
We convert this estimate into an optimality probability using the same standardization as DiffusionNFT:
\begin{align}
    \widetilde r_{\mathrm{norm}}(\zb_\tau)
    =
    \frac{
        \widetilde r_{\mathrm{raw}}(\zb_\tau)
        -
        \mathbb{E}_{\zb'_\tau \sim p_{\theta_{\mathrm{old}}}}
        \left[
            \widetilde r_{\mathrm{raw}}(\zb'_\tau)
        \right]
    }{
        Z
    },
    \qquad
    \widetilde r(\zb_\tau)
    =
    \frac{1}{2}
    +
    \frac{1}{2}
    \mathrm{clip}
    \left(
        \widetilde r_{\mathrm{norm}}(\zb_\tau),
        -1,
        1
    \right).
    \label{eq:appendix_vnft_reward_norm}
\end{align}

It remains to construct the forward-regression target when the anchor is $\zb_\tau$ rather than $\zb_0$.
For $t>\tau$, the Gaussian path in Eq.~\eqref{eq:forward_path} gives the conditional bridge coefficients
\begin{align}
    \bar{\alpha}_{t|\tau}
    =
    \frac{\alpha_t}{\alpha_\tau},
    \qquad
    \bar{\sigma}_{t|\tau}
    =
    \sqrt{
        \sigma_t^2
        -
        \bar{\alpha}_{t|\tau}^{\,2}
        \sigma_\tau^2
    }.
    \label{eq:appendix_bridge_coefficients}
\end{align}
Starting from the stopped latent $\zb_\tau$, we sample a noisier latent
\begin{align}
    \zb_t
    =
    \bar{\alpha}_{t|\tau}\zb_\tau
    +
    \bar{\sigma}_{t|\tau}\bepsilon,
    \qquad
    \bepsilon\sim\Nc(0,I_d),
    \qquad
    t>\tau,
    \label{eq:appendix_bridge_forward}
\end{align}
with corresponding conditional velocity target
\begin{align}
    u_{t|\tau}(\zb_t\mid\zb_\tau)
    =
    \dot{\bar{\alpha}}_{t|\tau}\zb_\tau
    +
    \dot{\bar{\sigma}}_{t|\tau}\bepsilon,
    \label{eq:appendix_bridge_velocity}
\end{align}
where $\dot{\bar\alpha}_{t|\tau}$ and $\dot{\bar\sigma}_{t|\tau}$ denote time derivatives of the bridge coefficients.
This mirrors the forward-regression construction in Eq.~\eqref{eq:appendix_nft_velocity}, with the anchor replaced by $\zb_\tau$.

The value function-based DiffusionNFT objective is therefore
\begin{align}
    \mathcal{L}_{\mathrm{V\text{-}NFT}}(\theta)
    =
    \mathbb{E}_{\zb_\tau\sim p_{\theta_{\mathrm{old}}},\,t>\tau,\,\bepsilon}
    \Big[
    &\,\widetilde r(\zb_\tau)
    \left\|
        u_{\theta}^{+}(\zb_t,t)
        -
        u_{t|\tau}(\zb_t\mid\zb_\tau)
    \right\|_2^2
    \nonumber\\
    &+
    \bigl(1-\widetilde r(\zb_\tau)\bigr)
    \left\|
        u_{\theta}^{-}(\zb_t,t)
        -
        u_{t|\tau}(\zb_t\mid\zb_\tau)
    \right\|_2^2
    \Big].
    \label{eq:appendix_vnft_objective}
\end{align}

In summary, the original DiffusionNFT uses the terminal clean reward $r_\phi(\zb_0)$ to weight positive and negative forward-regression targets anchored at $\zb_0$.
Our StitchVM-based variant replaces this terminal reward with $V_\omega^{(i^\star, j^\star)}(\zb_\tau)$ and anchors the forward regression at the stopped latent $\zb_\tau$.
Since $V_\omega^{(i^\star, j^\star)}(\zb_\tau)$ estimates the expected terminal reward conditioned on the intermediate state, this construction yields a training signal at intermediate denoising steps without completing the reverse process to $\zb_0$.

\section{Additional Experimental Details}
\label{appendix:additional_experiment}

\subsection{Stitched Value Model Experiments}
\label{appendix:StitchVM_experiments}

\paragraph{StitchVM Implementation details.}
For searching the stitching-layer index, we extract paired features $\big(u_\theta^{\leq i}(\zb_t,t),\, r_\phi^{\leq j-1}(\zb_0)\big)$ for each candidate pair $(i,j)$.
The probe set contains $N_{\mathrm{probe}}=200$ HPDv2 images with $t\!\sim\!\mathrm{Unif}[0,1]$.
We cast the cached features to \texttt{float64} and solve $W^\star_{i,j}$ in closed form using \texttt{torch.linalg.lstsq}.
We restrict the reward model side of the search to the first four transformer blocks ($j\le4$) for all backbone-reward combinations.
On the diffusion side, we sweep all DiT-block indices $i$.

We finetune the stitching layer $s_\psi=(F_\psi,G_\psi)$ together with the truncated reward model suffix $r_\phi^{\geq j}$.
The diffusion backbone remains frozen.
Optimization uses fused AdamW~\cite{loshchilov2018decoupled} with base learning rate $1\!\times\!10^{-5}$, weight decay $0$, and $100$ linear warmup steps.
The stitching layer $s_\psi$ uses an additional $5\times$ learning-rate multiplier on top of the base learning rate, since $F_\psi$ is initialized from the SVD fit and $G_\psi$ is zero-initialized, whereas the reward model suffix already starts from a strong pretrained state and only requires mild adaptation.
We use a global batch size of $128$ images per optimizer step.
Noise levels during StitchVM training are drawn from a center-biased distribution over $\sigma\!\in\![0,1]$, since both endpoints carry little learning signal.

We optimize Eq.~\eqref{eq:value_training} as a multi-level distillation between the stitched model and the original frozen reward model evaluated on the same clean image.
The exact loss depends on the reward model.
For CLIP, DFN-CLIP, and HPSv2, the final score is computed from the inner product between an image embedding and a text embedding.
However, our distillation is performed entirely on the image side: the text encoder is not run or updated during StitchVM training.
We therefore distill the reward model at the representation level:
\begin{align}
    \mathcal{L}_{\mathrm{value}}
    =
    \ell\!\left(
        \tilde e_{\mathrm{StitchVM}},\,
        \tilde e_{\mathrm{RM}}
    \right)
    +
    \lambda_{\mathrm{tok}}\,
    \ell\!\left(
        h_{\mathrm{StitchVM}},\,
        h_{\mathrm{RM}}
    \right) ,
    \label{eq:appendix_clip_distill_loss}
\end{align}
where $\ell(\cdot,\cdot)$ is a regression loss specified below, $h_{\mathrm{StitchVM}}$ and $h_{\mathrm{RM}}$ are the per-token output activations of $r_\phi^{\geq j}$ for the stitched model and the original frozen reward model, respectively, $e_{\mathrm{StitchVM}}$ and $e_{\mathrm{RM}}$ are the corresponding image embeddings, and $\tilde e_{\mathrm{StitchVM}}=e_{\mathrm{StitchVM}}/\|e_{\mathrm{StitchVM}}\|_2$ and $\tilde e_{\mathrm{RM}}=e_{\mathrm{RM}}/\|e_{\mathrm{RM}}\|_2$ are their $\ell_2$-normalized versions.
We instantiate $\ell$ as the squared $\ell_2$ loss and set $\lambda_{\mathrm{tok}}=1$.

For the aesthetic predictor, the final score is a scalar produced by a fixed MLP head $s_{\mathrm{aes}}(\cdot)$ applied to the normalized CLIP image embedding.
We therefore add an explicit score-level term on top of the token and embedding losses:
\begin{align}
    \mathcal{L}_{\mathrm{value}}
    =
    \lambda_{\mathrm{tok}}\,
    \ell\!\left(
        h_{\mathrm{StitchVM}},\,
        h_{\mathrm{RM}}
    \right)
    +
    \lambda_{\mathrm{emb}}\,
    \ell\!\left(
        \tilde e_{\mathrm{StitchVM}},\,
        \tilde e_{\mathrm{RM}}
    \right)
    +
    \lambda_{\mathrm{sc}}\,
    \ell\!\left(
        s_{\mathrm{aes}}(\tilde e_{\mathrm{StitchVM}}),\,
        s_{\mathrm{aes}}(\tilde e_{\mathrm{RM}})
    \right) .
    \label{eq:appendix_aesthetic_distill_loss}
\end{align}
The notation for $h_{\mathrm{StitchVM}}, h_{\mathrm{RM}}, \tilde e_{\mathrm{StitchVM}}, \tilde e_{\mathrm{RM}}$ matches Eq.~\eqref{eq:appendix_clip_distill_loss}.
For the aesthetic case, we instantiate $\ell$ as the Smooth-$\ell_1$ loss and set $\lambda_{\mathrm{tok}}=\lambda_{\mathrm{emb}}=\lambda_{\mathrm{sc}}=1$.
In all cases, the original reward model parameters are frozen, and for the aesthetic predictor, the score head $s_{\mathrm{aes}}$ is also frozen.

The noisy latent retrieval and preference benchmarks in Table~\ref{tab:main_stitchvm_results} use $1024{\times}1024$ source images. 
For the training alignment experiments in Table~\ref{tab:rl_finetune}, we used StitchVM trained on $512{\times}512$ resolution to match the image resolution in~\cite{zheng2026diffusionnft, liu2025flow}.

\paragraph{Baseline implementation details.}
The VAE-stitching baseline (rows ``$\oplus$ SD3 VAE'' and ``$\oplus$ FLUX VAE'' in Table~\ref{tab:main_stitchvm_results} and Fig.~\ref{fig:stitchvm_results}) follows the same training recipe as our StitchVM, but stitches at the \emph{VAE-encoder level} rather than through the diffusion backbone, in the spirit of VIST3A~\cite{go2026texttod} and VGGRPO~\cite{an2026vggrpo}.
Concretely, we replace the diffusion head $u_\theta^{\leq i}$ in Eq.~\eqref{eq:stitching_model_architecture} with the identity on the noisy VAE latent.
The noisy latent $\zb_t$ is fed directly into a $1{\times}1$ stitching convolution and a residual MLP, then bilinearly resampled to the patch resolution expected by the reward model suffix $r_\phi^{\geq j}$.
The original VIST3A and VGGRPO formulations stitch at $t\!=\!0$ using clean latents.
We extend the same architecture to the noisy latent regime by sampling $\zb_t$ from the same forward process used for StitchVM.
Thus, the only conceptual difference between this baseline and our StitchVM is whether the diffusion DiT, and hence noise-aware diffusion features, is present in the front end.
The stitching layer is initialized from the closed-form least-squares fit between $\zb_t$ and $r_\phi^{\leq j-1}(\zb_0)$.
This is analogous to Eq.~\eqref{eq:stitch_fit}, but with $u_\theta^{\leq i}=\zb_t$.

We finetune the stitching layer and the reward model suffix for $15$ epochs.
This is longer than the $5$ epochs used for StitchVM because the absence of diffusion features makes the optimization landscape harder.
All other optimization hyperparameters match StitchVM: AdamW with base learning rate $1\!\times\!10^{-5}$, a $5\times$ multiplier on the stitching layer, weight decay $0$, $100$ linear warmup steps.
Resolutions follow the standard image-input size of each base model.
The distillation losses are identical to those in Eqs.~\eqref{eq:appendix_clip_distill_loss}--\eqref{eq:appendix_aesthetic_distill_loss}.

For DiNa-LRM~\cite{liu2026beyond}, we use the original released checkpoint without any additional training or finetuning.

\subsection{Inference-Time Alignment Experiments}
\label{appendix:inference_time_alignment_experiments}

\paragraph{Shared setup.}
We use SD 3.5 Medium and SD 3.5 Large for DPS, and additionally FLUX.1-dev for FK steering, all at $1024\times 1024$ resolution.
For both DPS and FK steering, we use DrawBench prompts~\cite{saharia2022photorealistic} and report ImageReward, Aesthetic Score, HPSv2, and PickScore~\cite{kirstain2023pick}.
For FK steering, we additionally evaluate compositional alignment on GenEval~\cite{ghosh2023geneval}, and include reference comparisons to the base flow-matching sampler and Best-of-$N$ (BoN) sampling with the same $N=4$.

\paragraph{DPS.}
We use 100 denoising steps with classifier-free guidance scale 4.5 and SDE sampling under LinearSDE of~\cite{kim2026inferencetime}.
The DPS guidance strength is swept independently for standard DPS and DPS with StitchVM, since the two methods produce gradients of different magnitudes: standard DPS backpropagates through the denoiser, VAE, and pixel-space reward, while DPS with StitchVM evaluates the gradient directly in the noisy latent space.
For DPS with StitchVM, we sweep over the range $[0.5, 7.0]$, and for standard DPS, we sweep over the range $[0.01, 0.5]$; in both cases, we report the configuration that maximizes the average of the metrics.

\paragraph{FK steering.}
We use the default sampling configuration of each base generator.
For SD~3.5 Medium and SD~3.5 Large, we use 40 denoising steps with classifier-free guidance~\cite{ho2022classifier} scale 4.5.
For FLUX.1-dev, we use 28 denoising steps with guidance scale 3.5.
For SDE sampling, we use the LinearSDE following~\cite{kim2026inferencetime}.

For standard FK steering, we apply resampling every 4 denoising steps within a fixed active window.
For SD~3.5 Medium and Large, this window spans denoising steps 8 through 32 out of 40 total steps.
For FLUX, it spans denoising steps 6 through 22 out of 28 total steps.
For FK steering with StitchVM, we apply proposal scaling on a separate schedule from FK resampling: it is applied at every step in the early high-noise portion of sampling, covering approximately the first $40\%$ of the denoising trajectory.
In all FK steering experiments with StitchVM, we use $N=4$ particles and $M=2$ proposals per particle unless otherwise specified.
Other settings follow the default configuration of FK steering.

\subsection{Training-Time Alignment Experiments}
\label{appendix:training_time_alignment_experiments}

\paragraph{Shared setup.}
For both direct reward finetuning and DiffusionNFT, we follow the protocol of~\cite{xue2025dancegrpo}.
The base generator is SD3.5~Medium, and training prompts are drawn from HPDv2~\cite{wu2023human}.
The training reward is defined as the equal-weight sum of DFN-CLIP and HPSv2 rewards:
\begin{equation}
    \mathcal{R}(\zb_0)
    =
    \mathcal{R}_{\mathrm{DFN\text{-}CLIP}}(\zb_0)
    +
    \mathcal{R}_{\mathrm{HPSv2}}(\zb_0),
\end{equation}
where $\mathcal{R}_{\mathrm{DFN\text{-}CLIP}}$ denotes the DFN-CLIP image-text alignment score and $\mathcal{R}_{\mathrm{HPSv2}}$ denotes the HPSv2 human preference score.
Both rewards are evaluated on the decoded clean image, and we use the shorthand $\mathcal{R}(\zb_0)=\mathcal{R}(\Dc(\zb_0))$.
For variants with StitchVM, we replace the terminal clean reward with the corresponding StitchVM prediction.
All training runs use $4$ nodes $\times$ $4$ NVIDIA GH200 GPUs, for $16$ GPUs in total, at $512{\times}512$ resolution.

\paragraph{Direct reward finetuning setup.}
For all direct reward finetuning methods, we finetune the model with LoRA~\cite{hu2022lora}.
We use LoRA rank $r=32$ and $\alpha=64$.
LoRA is applied to all attention projections of the joint MM-DiT blocks: \texttt{to\_q}, \texttt{to\_k}, \texttt{to\_v}, \texttt{to\_out.0}, \texttt{add\_q\_proj}, \texttt{add\_k\_proj}, \texttt{add\_v\_proj}, and \texttt{to\_add\_out}.
The trainable parameters are optimized with AdamW~\cite{loshchilov2018decoupled} using weight decay $0$, learning rate $5\!\times\!10^{-5}$, gradient clipping at norm $1.0$, and EMA on the trainable parameters.
The regularization weight in Eqs.~\eqref{eq:appendix_drft_objective}--\eqref{eq:appendix_vdrft_objective} is $\lambda=0.01$.
Training sampling uses a $10$-step rectified-flow schedule with shift $3.0$ at $\mathrm{cfg}=1.0$.
Per training step, we draw $8$ prompts per device with $16$ gradient-accumulation steps and run $32$ sample-and-update batches per epoch.
Across all $16$ GPUs, this gives an effective batch size of $8\!\times\!16\!\times\!16=2048$ samples per optimizer update.
For DRaFT with StitchVM, the stopping step in Eq.~\eqref{eq:appendix_vdrft_objective} is sampled uniformly from the index window $[3,7]$ of the $10$-step schedule.

\paragraph{DiffusionNFT setup.}
We follow the multi-reward setup from the official DiffusionNFT implementation\footnote{\url{https://github.com/NVlabs/DiffusionNFT}}, except that we use the joint DFN-CLIP and HPSv2 reward defined above.
For DiffusionNFT with StitchVM, each rollout is stopped early at a step sampled uniformly from $\{12,\dots,17\}$ of the $25$-step schedule.

\paragraph{Flow-GRPO-Fast setup.}
We follow the PickScore setup from the official Flow-GRPO-Fast implementation\footnote{\url{https://github.com/yifan123/flow_grpo}}, except that we replace the reward with the joint DFN-CLIP and HPSv2 reward defined above and run the method on the same $16$-GPU setup as the other training-time alignment experiments.

\paragraph{Evaluation protocol.}
For sample generation, we use $40$ denoising steps at $\mathrm{cfg}=1.0$ for all methods.
All methods are evaluated on fully denoised samples using the original clean-image reward models, including the variants with StitchVM.
We report total GPU-hours per run in the ``GPU-h'' column of Table~\ref{tab:rl_finetune} as a wall-clock training-cost measure.
All timings are measured on the same $16$-GH200 layout.

\section{Additional Experimental Results}
\label{appendix:additional_results}

\subsection{Full Numerical Results of StitchVM Performance}
\label{appendix:stitchvm_full_results}

\begin{table*}[t]
\centering
\scriptsize
\setlength{\tabcolsep}{2pt}
\renewcommand{\arraystretch}{1.08}
\caption{\textbf{Results of StitchVM on noisy latents.}
$\oplus$ denotes stitching of a reward model with a pretrained diffusion module
(VAE encoder or DiT). 
}
\label{tab:main_stitchvm_results}
\vspace{-0.1cm}
\begin{subtable}{\textwidth}
\centering
\caption{\textbf{Zero-shot image-text retrieval (Recall@1) on MSCOCO and 
Flickr30K.}}
\vspace{-0.1cm}
\resizebox{\textwidth}{!}{%
\begin{tabular}{l *{5}{c} *{5}{c} *{5}{c} *{5}{c}}
\toprule
& \multicolumn{10}{c}{\textbf{MSCOCO} Recall@1} 
& \multicolumn{10}{c}{\textbf{Flickr30K} Recall@1} \\
\cmidrule(lr){2-11} \cmidrule(lr){12-21}
& \multicolumn{5}{c}{Image$\rightarrow$Text Retrieval} & \multicolumn{5}{c}{Text$\rightarrow$Image Retrieval}
& \multicolumn{5}{c}{Image$\rightarrow$Text Retrieval} & \multicolumn{5}{c}{Text$\rightarrow$Image Retrieval} \\
\cmidrule(lr){2-6}\cmidrule(lr){7-11}\cmidrule(lr){12-16}\cmidrule(lr){17-21}
& \multicolumn{5}{c}{Noise level $\sigma$} & \multicolumn{5}{c}{Noise level $\sigma$}
& \multicolumn{5}{c}{Noise level $\sigma$} & \multicolumn{5}{c}{Noise level $\sigma$} \\
Method
& 0.1 & 0.25 & 0.5 & 0.75 & 0.9
& 0.1 & 0.25 & 0.5 & 0.75 & 0.9
& 0.1 & 0.25 & 0.5 & 0.75 & 0.9
& 0.1 & 0.25 & 0.5 & 0.75 & 0.9 \\
\midrule
NoisyCLIP & 46.07 & 45.16 & 32.71 &  6.14 & 2.11
 & 35.41 & 35.38 & 28.94 &  7.62 & 2.35
 & 76.72 & 73.61 & 49.66 & 10.39 & 2.48
 & 64.48 & 62.27 & 47.58 & 12.04 & 3.18 
 \\
\midrule
\rowcolor{mygrey} \multicolumn{1}{l}{\textit{CLIP ViT-L/14}}
 & \multicolumn{5}{c}{57.90} & \multicolumn{5}{c}{37.09}
 & \multicolumn{5}{c}{87.30} & \multicolumn{5}{c}{67.36} \\
\quad $\oplus$ SD3.5 VAE
& 42.62 & 38.91 & 21.14 &  2.78 & 0.08
 & 31.88 & 29.27 & 17.59 &  4.18 & 0.37
 & 73.40 & 67.25 & 38.30 &  7.10 & 0.40
 & 60.90 & 56.15 & 35.86 &  8.59 & 1.20 \\
\quad $\oplus$ FLUX VAE
 & 39.60 & 36.24 & 24.38 &  4.86 & 0.20
 & 29.69 & 27.70 & 19.10 &  6.23 & 0.84
 & 66.60 & 62.10 & 41.90 &  9.60 & 0.70
 & 56.12 & 52.96 & 38.68 & 13.30 & 1.08 \\
\rowcolor{GoogleBlue!20} \quad $\oplus$ SD3.5-M (StitchVM)
 & 56.82 & 56.22 & 52.50 & 32.52 & 3.68
 & 38.64 & 38.50 & 36.67 & 23.95 & 5.37
 & 86.30 & 85.40 & 82.70 & 58.00 & 8.90
 & 67.66 & 68.28 & 65.62 & 47.20 & 10.46 \\
\rowcolor{GoogleBlue!20} \quad $\oplus$ SD3.5-L (StitchVM)
 & \textbf{57.22} & \textbf{56.90} & 54.36 & 34.10 & 3.62
 & 39.27 & 39.36 & 38.15 & 25.56 & 5.43
 & \textbf{86.80} & \textbf{87.00} & 84.70 & 60.60 & 9.40
 & 69.04 & \textbf{69.70} & 67.88 & 50.88 & 10.88 \\
\rowcolor{GoogleBlue!20} \quad $\oplus$ FLUX  (StitchVM)
 & 56.40 & 56.30 & \textbf{54.46} & \textbf{37.00} & \textbf{6.00}
 & \textbf{39.43} & \textbf{39.89} & \textbf{38.60} & \textbf{28.26} & \textbf{7.61}
 & 86.20 & 86.60 & \textbf{84.80} & \textbf{65.80} & \textbf{15.10}
 & \textbf{69.28} & 68.96 & \textbf{68.58} & \textbf{54.70} & \textbf{16.32} \\
\midrule
\rowcolor{mygrey} \multicolumn{1}{l}{\textit{DFN-CLIP}}
 & \multicolumn{5}{c}{70.62} & \multicolumn{5}{c}{54.16} 
 & \multicolumn{5}{c}{92.20} & \multicolumn{5}{c}{80.86} \\ 
\quad $\oplus$ SD3.5 VAE
 & 57.24 & 54.25 & 37.42 & 9.84 & 1.18
 & 47.53 & 44.77 & 33.46 & 9.97 & 1.14
 & 81.20 & 75.35 & 47.40 & 11.60 & 1.01
 & 74.72 & 69.21 & 50.40 & 11.57 & 1.15 \\
\quad $\oplus$ FLUX VAE
 & 54.22 & 51.58 & 40.66 & 10.92 & 1.31
 & 45.34 & 43.20 & 34.97 & 11.12 & 1.48
 & 74.40 & 70.20 & 51.00 & 12.10 & 1.27
 & 69.94 & 66.02 & 53.22 & 14.28 & 1.19 \\
\rowcolor{GoogleBlue!20} \quad $\oplus$ SD3.5-M (StitchVM)
 & \textbf{71.56} & 71.22 & 68.24 & 46.94 & 6.28
 & 54.22 & 54.21 & 52.34 & 38.04 & 8.95
 & 93.60 & \textbf{93.50} & 91.90 & 74.10 & 13.90
 & \textbf{81.62} & \textbf{81.48} & 79.66 & 62.28 & 14.98 \\
\rowcolor{GoogleBlue!20} \quad $\oplus$ SD3.5-L (StitchVM)
 & 71.44 & \textbf{71.56} & \textbf{68.78} & 48.58 & 7.14
 & 54.29 & 54.00 & 52.54 & 38.84 & 9.19
 & \textbf{94.10} & \textbf{93.50} & 91.80 & 72.50 & 14.10
 & 81.48 & 81.34 & \textbf{80.16} & 62.18 & 14.86 \\
\rowcolor{GoogleBlue!20} \quad $\oplus$ FLUX (StitchVM)
 & 71.30 & 70.84 & 68.24 & \textbf{49.38} & \textbf{8.58}
 & \textbf{54.39} & \textbf{54.24} & \textbf{52.84} & \textbf{39.93} & \textbf{11.78}
 & 93.20 & 93.10 & \textbf{92.00} & \textbf{74.60} & \textbf{18.70}
 & 81.52 & 81.28 & \textbf{80.16} & \textbf{63.84} & \textbf{19.00} \\
\bottomrule
\end{tabular}
}
\vspace{0.1cm}
\end{subtable}
\begin{subtable}[t]{0.57\textwidth}
\centering
\caption{\textbf{Preference accuracy on HPDv2 and ImageReward.}}
\vspace{-0.1cm}
\label{tab:main_stitchvm_hpsv2}
\resizebox{\textwidth}{!}{%
\begin{tabular}{l *{5}{c} *{5}{c}}
\toprule
& \multicolumn{5}{c}{\textbf{HPDv2 Benchmark}}
& \multicolumn{5}{c}{\textbf{ImageReward Benchmark}} \\
\cmidrule(lr){2-6}\cmidrule(lr){7-11}
& \multicolumn{5}{c}{Noise level $\sigma$}
& \multicolumn{5}{c}{Noise level $\sigma$} \\
Method
& 0.1 & 0.25 & 0.5 & 0.75 & 0.9
& 0.1 & 0.25 & 0.5 & 0.75 & 0.9 \\
\midrule
DiNa-LRM~\cite{liu2026beyond}
 & 81.36 & 81.41 & 80.69 & 78.16 & 65.18
 & 60.61 & 60.83 & 60.61 & 59.21 & 54.13 \\
\midrule
\rowcolor{mygrey} \multicolumn{1}{l}{\textit{HPSv2}}
 & \multicolumn{5}{c}{83.28} & \multicolumn{5}{c}{67.14} \\
\quad $\oplus$ SD3 VAE
 & 79.67 & 79.22 & 76.44 & 71.49 & 62.77
 & 60.72 & 59.56 & 57.89 & 54.59 & 53.00 \\
\quad $\oplus$ FLUX VAE
 & 79.60 & 79.12 & 77.96 & 73.02 & 65.23
 & 62.27 & 62.48 & 60.76 & 58.14 & 52.56 \\
\rowcolor{GoogleBlue!20} \quad $\oplus$ SD3.5-M (StitchVM)
 & 81.78 & 81.87 & 80.92 & 78.90 & 74.68
 & \textbf{66.82} & 67.00 & 65.25 & 63.33 & 58.19 \\
\rowcolor{GoogleBlue!20} \quad $\oplus$ SD3.5-L (StitchVM)
 & 82.03 & 82.09 & 81.40 & 78.98 & 75.39
 & 66.68 & \textbf{67.28} & 66.38 & 63.04 & 58.71 \\
\rowcolor{GoogleBlue!20} \quad $\oplus$ FLUX  (StitchVM)
 & \textbf{82.53} & \textbf{82.61} & \textbf{81.98} & \textbf{79.25} & \textbf{75.90}
 & 66.23 & 66.49 & \textbf{66.67} & \textbf{64.77} & \textbf{59.04} \\
\bottomrule
\end{tabular}}
\end{subtable}
\hfill
\begin{subtable}[t]{0.42\textwidth}
\centering
\caption{\textbf{Aesthetic-score correlation on AVA.}}
\vspace{-0.1cm}
\label{tab:main_stitchvm_aesthetic}
\resizebox{\textwidth}{!}{%
\begin{tabular}{l *{5}{c}}
\toprule
& \multicolumn{5}{c}{\textbf{SRCC}} \\
\cmidrule(lr){2-6}
& \multicolumn{5}{c}{Noise level $\sigma$} \\
Method
& 0.1 & 0.25 & 0.5 & 0.75 & 0.9 \\
\midrule
\rowcolor{mygrey} \multicolumn{1}{l}{\textit{Aesthetic Predictor}}
 & \multicolumn{5}{c}{0.618} \\
\quad $\oplus$ SD3 VAE
 & 0.438 & 0.423 & 0.334 & 0.146 & 0.068 \\
\quad $\oplus$ FLUX VAE
 & 0.455 & 0.451 & 0.415 & 0.322 & 0.177 \\
\rowcolor{GoogleBlue!20} \quad $\oplus$ SD3.5-M (StitchVM)
 & 0.609 & 0.610 & 0.597 & 0.538 & 0.369 \\
\rowcolor{GoogleBlue!20} \quad $\oplus$ SD3.5-L (StitchVM)
 & \textbf{0.614} & 0.615 & \textbf{0.601} & 0.545 & 0.396 \\
\rowcolor{GoogleBlue!20} \quad $\oplus$ FLUX  (StitchVM)
 & 0.613 & \textbf{0.616} & 0.600 & \textbf{0.560} & \textbf{0.433} \\
\bottomrule
\end{tabular}}
\end{subtable}
\vspace{-0.4cm}
\end{table*}

For completeness, Table~\ref{tab:main_stitchvm_results} reports the full numerical results corresponding to the line plots in Figure~\ref{fig:stitchvm_results}.
The table is organized into three evaluation settings:
(a) zero-shot image-text retrieval on MSCOCO~\cite{lin2014microsoft} and Flickr30K~\cite{young2014image}, evaluated with Recall@1 in both Image$\to$Text and Text$\to$Image directions, for CLIP ViT-L/14~\cite{radford2021learning} and DFN-CLIP~\cite{fang2024data};
(b) preference accuracy on HPDv2~\cite{wu2023human} and ImageReward~\cite{xu2023imagereward}, for HPSv2~\cite{wu2023human}; and
(c) aesthetic score correlation on the AVA test split~\cite{murray2012ava}, evaluated with SRCC, for the Aesthetic Predictor~\cite{schuhmann2022improvedaestheticpredictor}.
Each setting reports results at noise levels $\sigma\in\{0.1,\, 0.25,\, 0.5,\, 0.75,\, 0.9\}$.

\subsection{Training Curves in Training-Time Alignment}
\label{appendix:additional_results_rl_convergence}

\begin{figure}[t]
\centering
\includegraphics[width=\linewidth]{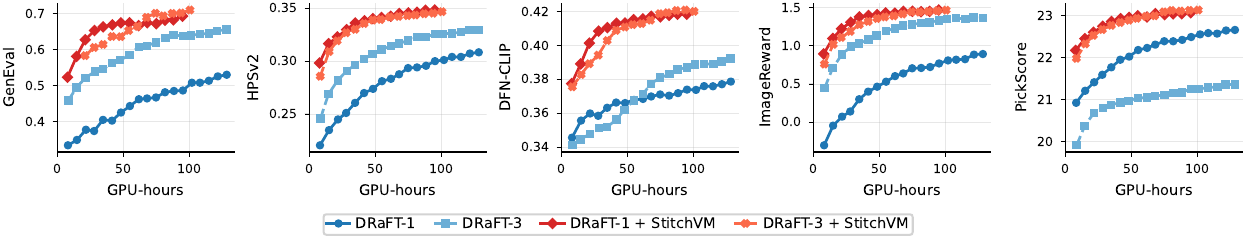}
\caption{\textbf{StitchVM improves the quality--efficiency trade-off of DRaFT.}
GenEval, HPSv2, DFN-CLIP, ImageReward, and PickScore plotted against GPU-hours for DRaFT and DRaFT with StitchVM during finetuning on the joint DFN-CLIP and HPSv2 reward.}
\label{fig:rl_convergence_appendix_draft}
\end{figure}

\begin{figure}[t]
\centering
\includegraphics[width=\linewidth]{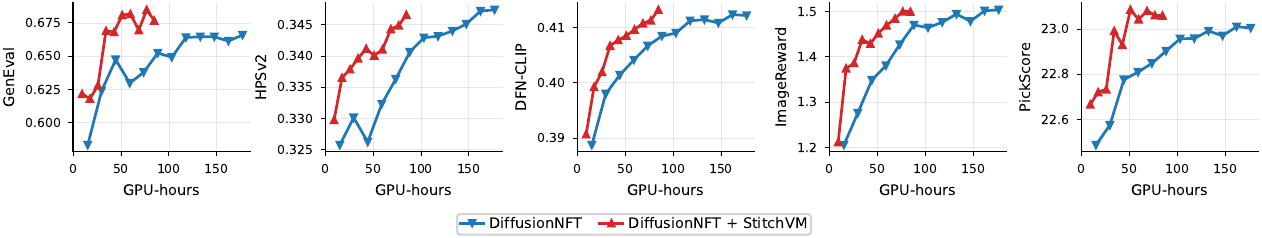}
\caption{\textbf{StitchVM accelerates DiffusionNFT training.}
GenEval, HPSv2, DFN-CLIP, ImageReward, and PickScore plotted against GPU-hours for DiffusionNFT and DiffusionNFT with StitchVM during finetuning on the joint DFN-CLIP and HPSv2 reward.}
\label{fig:rl_convergence_appendix_nft}
\end{figure}

Figures~\ref{fig:rl_convergence_appendix_draft} and~\ref{fig:rl_convergence_appendix_nft} report training curves for DRaFT and DiffusionNFT during finetuning on the joint DFN-CLIP and HPSv2 reward.
We plot GenEval, HPSv2, DFN-CLIP, ImageReward, and PickScore against GPU-hours to evaluate both training-reward metrics and held-out metrics.
For DRaFT (Fig.~\ref{fig:rl_convergence_appendix_draft}), variants with StitchVM reach higher final scores with substantially less compute across metrics, showing an improved quality--efficiency trade-off.
For DiffusionNFT (Fig.~\ref{fig:rl_convergence_appendix_nft}), variants with StitchVM reach a similar plateau to the original method in roughly half the GPU-hours, indicating faster training while preserving final quality.
Overall, these curves show that the gains from StitchVM are consistent across compositional alignment, training rewards, and held-out preference metrics.

\subsection{Analysis of Stitching Interface Search}
\label{appendix:stitching_layer_analysis}
\begin{figure}[t]
\centering
\includegraphics[width=\linewidth]{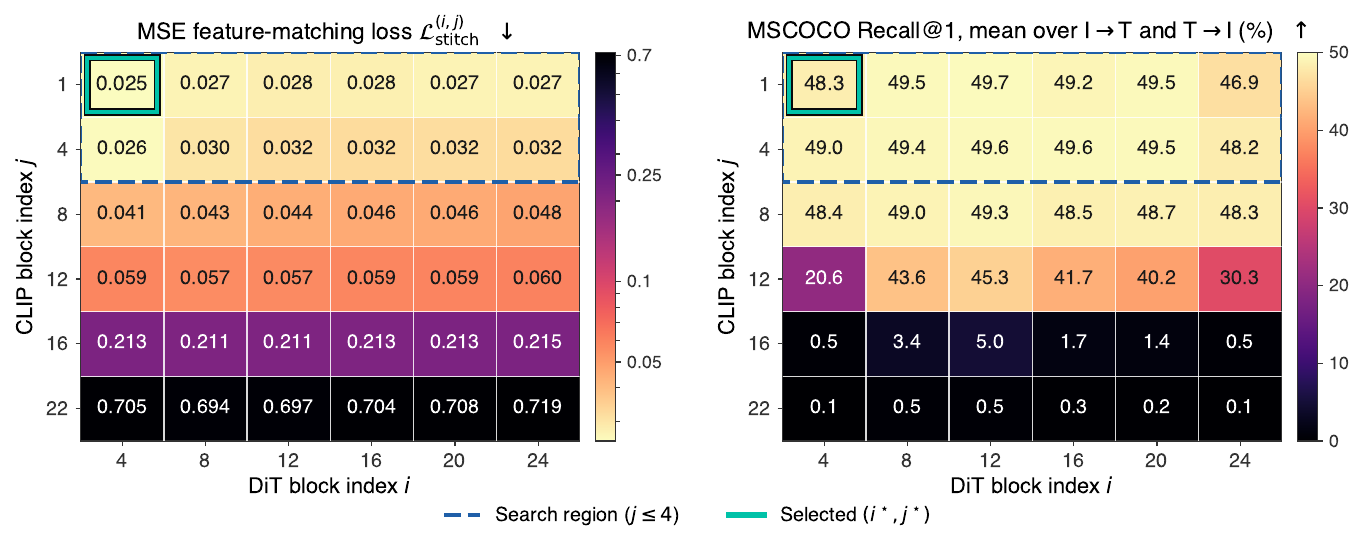}
\caption{\textbf{Stitching interface analysis on CLIP ViT-L/14 with SD~3.5~Medium.}
We sweep the DiT block index $i$ and CLIP block index $j$, fit the closed-form stitching map, and then run Stage-2 training.
\textit{Left}: closed-form MSE feature-matching loss.
\textit{Right}: MSCOCO Recall@1 averaged over image-to-text and text-to-image retrieval at $\sigma=0.1$ after Stage-2 training.
The closed-form loss filters out catastrophic interfaces, especially for later CLIP blocks where Recall@1 drops sharply, but is weakly aligned with the final score within the early low-loss region.
Based on this observation, we restrict the search to $j \le 4$ (dashed box) and select the lowest-loss cell $(i^\star,j^\star)=(4,1)$.}
\label{fig:stitching_layer_ablation}
\end{figure}

The closed-form feature-matching loss in Eq.~\eqref{eq:stitch_fit} is intended as a cheap search protocol for the stitch interface $(i,j)$.
To test this, we sweep $(i,j)$ over the full grid, fit $W^\star_{i,j}$ in closed form, and then train the stitched value model end-to-end.
Figure~\ref{fig:stitching_layer_ablation} reports the closed-form feature-matching loss alongside the MSCOCO Recall@1 of the trained model.

The closed-form loss sharply filters out poor interfaces but does not precisely identify the best one.
Once $j$ moves beyond the early CLIP blocks, the loss rises by roughly an order of magnitude, and Recall@1 drops from around $49$ to below $5$; Stage-2 finetuning cannot recover from these poor interfaces.
In contrast, within the low-loss region ($j \le 4$), Recall@1 remains uniformly high and only weakly correlates with the loss: the lowest-loss cell $(i,j)=(4,1)$ reaches Recall@1 of $48.3$, compared to a within-region maximum of $49.7$.

We exploit this asymmetry in our search.
Since high-loss configurations cannot be recovered through finetuning, we restrict the reward model cut to the early CLIP blocks ($j \le 4$; Appendix~\ref{appendix:svm_details}) and select the lowest-loss cell within this range.
This avoids catastrophic stitch points at a cost of about $1.5$ Recall@1 points relative to the within-region maximum, in exchange for a much cheaper search.

\subsection{Cross-Backbone Generalization of StitchVM}
\label{appendix:smallgenerator}

\begin{table*}[t]
\centering
\scriptsize
\setlength{\tabcolsep}{4pt}
\renewcommand{\arraystretch}{1.15}
\caption{\textbf{A StitchVM with a smaller generator backbone can guide a larger generator with only minor degradation.}
We apply FK steering to SD3.5-Large using either an SD3.5-Large StitchVM (same-backbone) or an SD3.5-Medium StitchVM (cross-backbone).
ImgRwd: ImageReward, Aes: Aesthetic, Pick: PickScore.}
\label{tab:xback_results}
\vspace{-0.1cm}
\resizebox{0.78\textwidth}{!}{%
\begin{tabular}{l c c c c c}
\toprule
StitchVM backbone
 & ImgRwd $\uparrow$ & Aes $\uparrow$ & HPSv2 $\uparrow$
 & Pick $\uparrow$ & GenEval $\uparrow$ \\
\midrule
\rowcolor{mygrey} \multicolumn{6}{l}{\textit{Target reward: HPSv2 score}} \\
\quad SD3.5-Large (same-backbone)
 & \textbf{1.20} & \textbf{5.52} & \textbf{0.310} & \textbf{23.11} & 0.70 \\
\rowcolor{GoogleBlue!20} \quad SD3.5-Medium (cross-backbone)
 & 1.17 & 5.50 & 0.309 & 23.06 & \textbf{0.72} \\
\midrule
\rowcolor{mygrey} \multicolumn{6}{l}{\textit{Target reward: Aesthetic score}} \\
\quad SD3.5-Large (same-backbone)
 & 1.03 & \textbf{5.68} & \textbf{0.298} & 22.87 & \textbf{0.68} \\
\rowcolor{GoogleBlue!20} \quad SD3.5-Medium (cross-backbone)
 & \textbf{1.05} & 5.65 & 0.297 & \textbf{22.91} & 0.67 \\
\midrule
\rowcolor{mygrey} \multicolumn{6}{l}{\textit{Target reward: CLIP score}} \\
\quad SD3.5-Large (same-backbone)
 & \textbf{1.20} & 5.40 & \textbf{0.298} & \textbf{22.96} & \textbf{0.71} \\
\rowcolor{GoogleBlue!20} \quad SD3.5-Medium (cross-backbone)
 & 1.10 & \textbf{5.42} & 0.296 & 22.88 & 0.69 \\
\bottomrule
\end{tabular}}
\vspace{-0.2cm}
\end{table*}

Many diffusion and flow-based generators~\cite{labs2025flux, stabilityai2024sd35} are released at multiple model scales while sharing the same VAE latent space.
Here, we ask whether a StitchVM trained on a smaller backbone can guide a larger generator, and how much performance is lost relative to a StitchVM trained on the same backbone in FK steering.

We keep the FK steering setup of Section~\ref{sec:results_inference_time} on SD~3.5~Large, but replace the SD~3.5~Large StitchVM with a StitchVM trained on SD~3.5~Medium.
SD~3.5~Medium and SD~3.5~Large share the same SD3 16-channel VAE, so their noisy latents are dimensionally compatible at every noise level.
Table~\ref{tab:xback_results} reports the same five metrics as Table~\ref{tab:fk_steering}.

Across the fifteen reward--metric cells, the SD~3.5~Medium StitchVM closely matches the SD~3.5~Large StitchVM: HPSv2 differs by at most $0.002$, the other metrics remain similar, and under HPSv2 reward, the SD~3.5~Medium StitchVM achieves higher GenEval than the SD~3.5~Large StitchVM ($0.72$ vs.\ $0.70$).
Because StitchVM uses only the early blocks of its diffusion backbone, building it on SD~3.5~Medium rather than SD~3.5~Large reduces the per-step value model cost.
A StitchVM trained on the smaller backbone can therefore guide the larger generator at lower inference cost with little loss in alignment quality.
This property can further reduce the cost of $M$-scaling in FK steering, since a smaller StitchVM makes each additional proposal cheaper while still effectively guiding the larger generator.

\subsection{Stopping-Step Distribution for RL Finetuning with StitchVM}
\label{appendix:stop_step_distribution}

\begin{figure}[t]
\centering
\includegraphics[width=\linewidth]{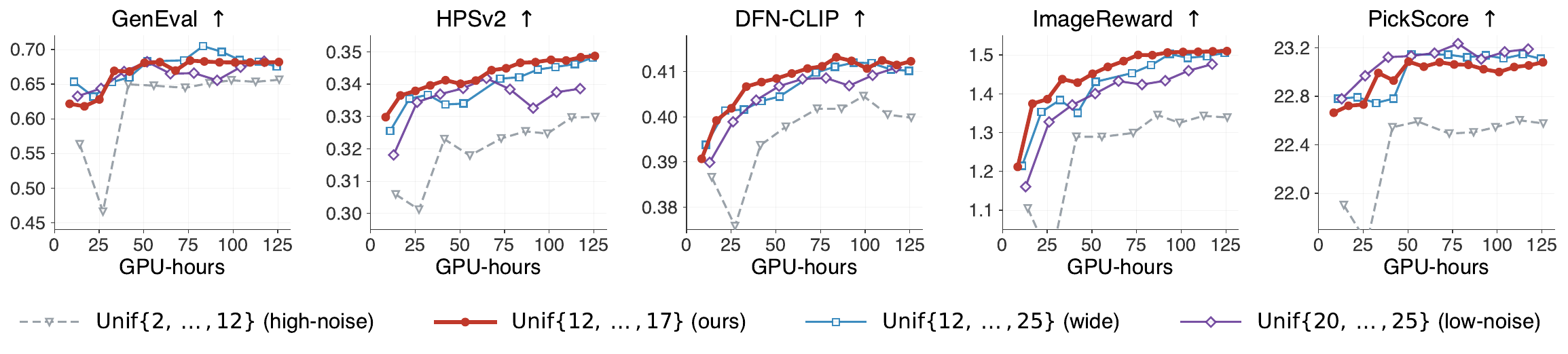}
\caption{\textbf{Effect of the stopping-step distribution in DiffusionNFT with StitchVM.}
We train DiffusionNFT with StitchVM on SD~3.5~Medium with the joint DFN-CLIP and HPSv2 reward, varying the window from which the stopping step is sampled uniformly over a 25-step denoising schedule.
Smaller step indices correspond to earlier, higher-noise latents, while larger indices correspond to later, lower-noise latents.
The high-noise window $\mathrm{Unif}\{2,\dots,12\}$ underperforms, while intermediate windows perform substantially better.
Our default $\mathrm{Unif}\{12,\dots,17\}$ achieves a strong quality--efficiency trade-off, reaching competitive final scores while converging faster than the wider window $\mathrm{Unif}\{12,\dots,25\}$.}
\label{fig:kstop_window_ablation}
\end{figure}
\begin{figure}[t]
\centering
\includegraphics[width=\linewidth]{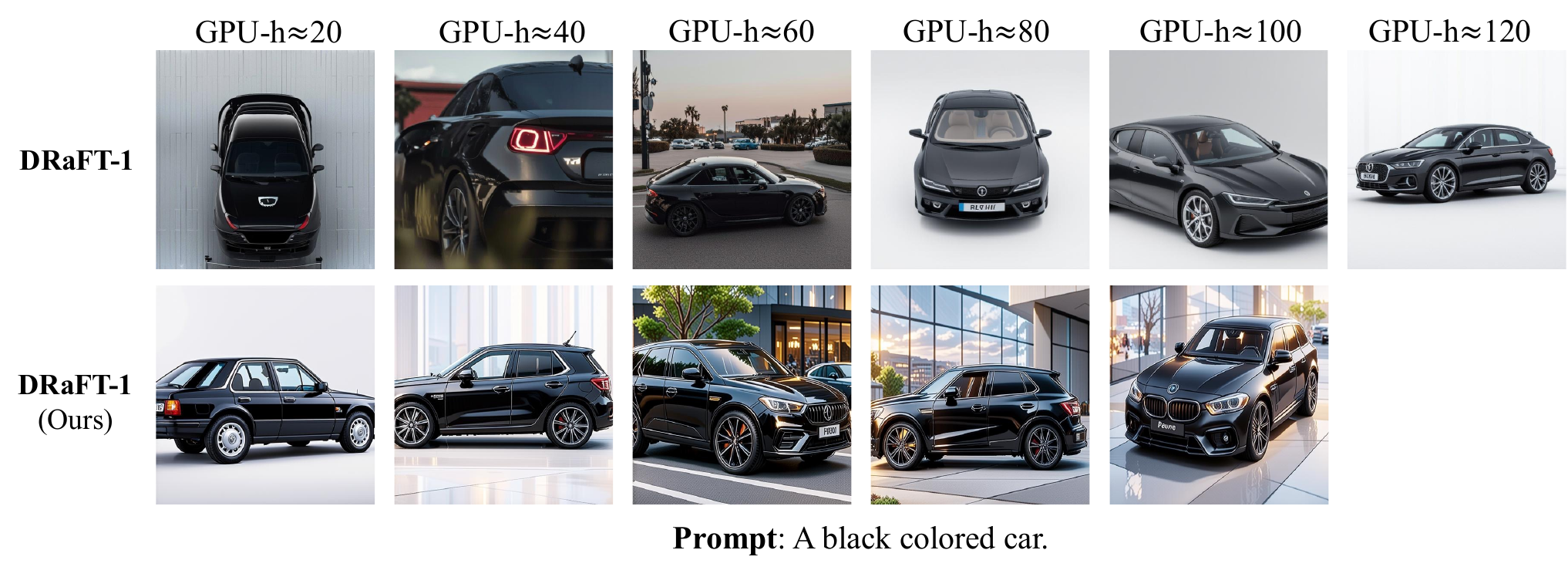}
\caption{\textbf{Qualitative comparison between DRaFT-1 and DRaFT-1 with StitchVM across training GPU-hours.}}
\label{fig:qual_rl_draft_1}
\end{figure}
\begin{figure}[t]
\centering
\includegraphics[width=\linewidth]{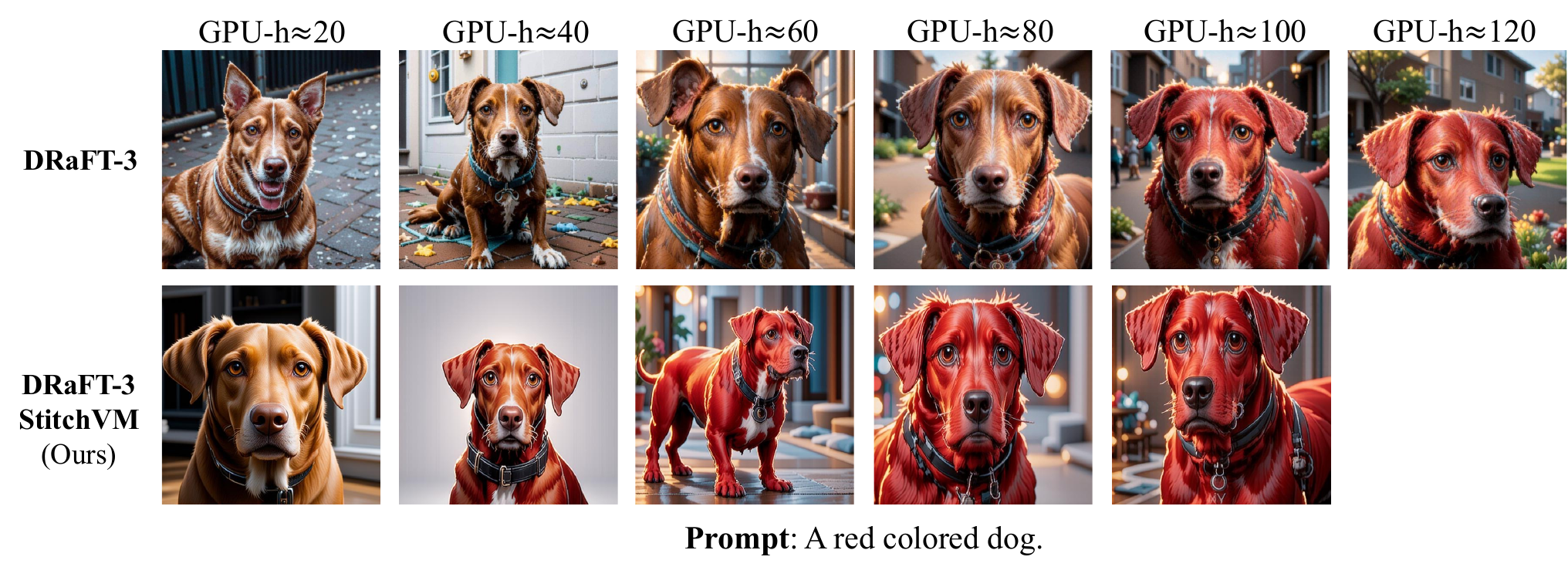}
\caption{\textbf{Qualitative comparison between DRaFT-3 and DRaFT-3 with StitchVM across training GPU-hours.}}
\label{fig:qual_rl_draft_3}
\end{figure}
\begin{figure}[t]
\centering
\includegraphics[width=\linewidth]{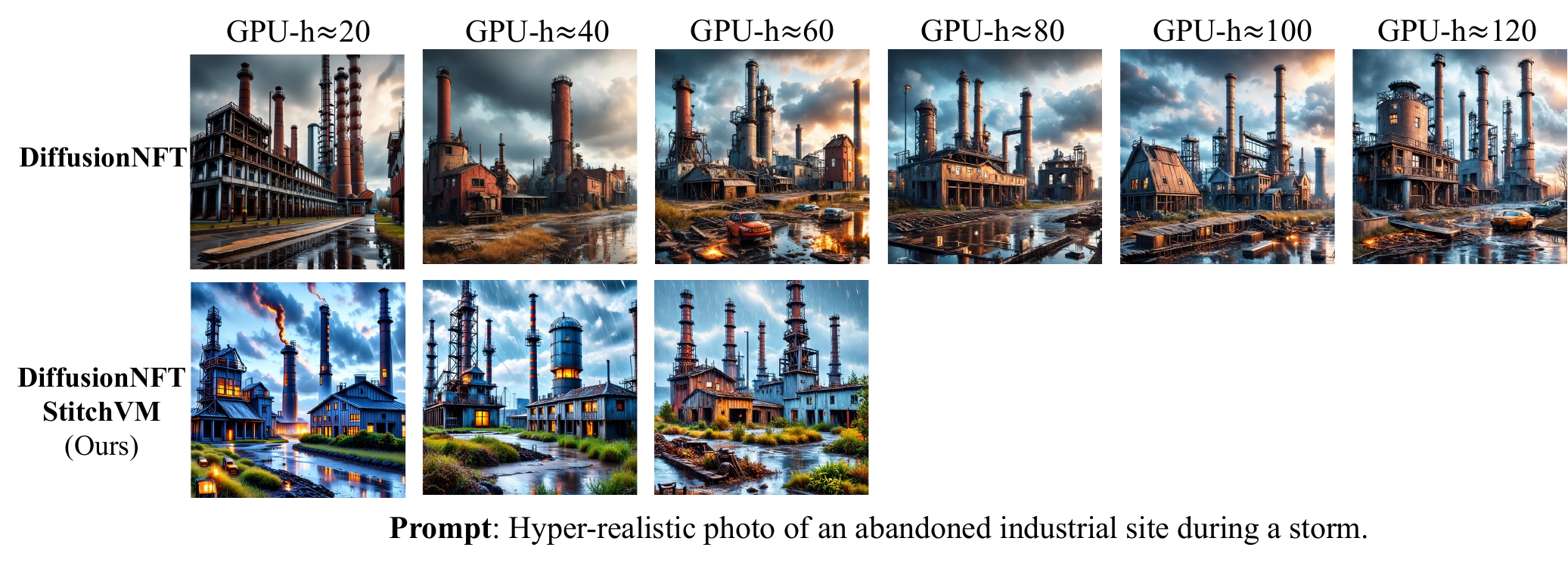}
\caption{\textbf{Qualitative comparison between DiffusionNFT and DiffusionNFT with StitchVM across training GPU-hours.}}
\label{fig:qual_rl_diffusionNFT}
\end{figure}

The training-time methods in Section~\ref{sec:method_training} stop the rollout at an intermediate noisy latent $\zb_\tau$ and use the StitchVM value function $V_\omega^{(i^\star,j^\star)}(\zb_\tau)$ in place of the terminal reward.
Here, we ablate the distribution from which the stopping step is sampled.
In our 25-step denoising schedule, smaller step indices correspond to earlier, higher-noise latents, while larger step indices correspond to later, cleaner latents.

On this schedule, we sample the stopping step uniformly from one of four windows: a high-noise window $\mathrm{Unif}\{2,\dots,12\}$, a tight intermediate window $\mathrm{Unif}\{12,\dots,17\}$, a wide intermediate-to-low-noise window $\mathrm{Unif}\{12,\dots,25\}$, and a low-noise window $\mathrm{Unif}\{20,\dots,25\}$.
The tight intermediate window is our default.

Figure~\ref{fig:kstop_window_ablation} reports GenEval, HPSv2, DFN-CLIP, ImageReward, and PickScore as a function of GPU-hours.
The high-noise window $\mathrm{Unif}\{2,\dots,12\}$ consistently underperforms, suggesting that stopping too early in the denoising trajectory yields value function targets that are less useful for finetuning.
In contrast, including intermediate-noise latents substantially improves performance.
Among the tested choices, $\mathrm{Unif}\{12,\dots,17\}$ provides the best quality--efficiency trade-off: it reaches strong final scores across metrics and converges faster than the wider window $\mathrm{Unif}\{12,\dots,25\}$.
The low-noise window $\mathrm{Unif}\{20,\dots,25\}$ remains competitive on some metrics, such as PickScore, but is less stable overall.

We therefore use $\mathrm{Unif}\{12,\dots,17\}$ as the default stopping-step distribution for all DiffusionNFT and DRaFT runs with StitchVM in Table~\ref{tab:rl_finetune}.

\subsection{Qualitative Results on RL Finetuning with StitchVM}
Figures~\ref{fig:qual_rl_draft_1}, \ref{fig:qual_rl_draft_3}, and~\ref{fig:qual_rl_diffusionNFT} show qualitative comparisons of DRaFT-1, DRaFT-3, and DiffusionNFT, with and without StitchVM, across training GPU-hours.
Across all three methods, the StitchVM-augmented variants reach the target prompt earlier and produce visually higher-quality samples throughout training (e.g., sharper details and more saturated colors, characteristic of HPSv2-tuned outputs).

\section{Limitation}
\label{appendix:limitation}

StitchVM enables transferring feedforward-model-based rewards to noisy latents, but it does not directly apply to rewards that are not implemented as feedforward models. 
We believe this limitation could be addressed by training surrogate reward models for such rewards, but we leave this direction to the future scope of this work.

\paragraph{Future directions.}

In this work, we focus on a simple method rather than more complex alternatives that may further improve performance.
In our view, timestep-aware training methods~\cite{go2023towards, park2024switch, lee2024multi, park2024denoising, ham2025diffusion}, which have demonstrated effectiveness in diffusion models, represent a promising future direction for improving StitchVM.

\section{Broader Impacts}
\label{appendix:broader_impacts}

\paragraph{Potential positive impacts.}
We present StitchVM, a practical framework for training noisy latent value models from existing pretrained reward models. 
By making value-model training substantially cheaper, StitchVM can improve and accelerate reward-based alignment methods for diffusion and flow models. 
This may help make generative models more controllable, more aligned with human preferences, and easier to adapt to downstream tasks where clean-image reward models are already available. 
More broadly, our work suggests a practical direction for value-model-based alignment, which could further accelerate research on safer and more reliable generative modeling.

\paragraph{Potential negative impacts.}
The same improvements in controllability and reward optimization may also be misused. 
For example, stronger alignment and steering methods could be used to generate more persuasive synthetic images, including misleading or deceptive visual content. 
StitchVM may also make it easier to optimize diffusion-based generation toward rewards, including poorly specified or harmful objectives. 
These risks are not unique to our method, but our work could lower the cost of applying reward-based steering and post-training. 
We therefore encourage the use of StitchVM together with appropriate safeguards for misuse, bias, and harmful content generation.

\end{document}